\documentclass{article}


\usepackage[final]{neurips_2023}




\usepackage[utf8]{inputenc} 
\usepackage[T1]{fontenc}    
\usepackage{hyperref}       
\usepackage{url}            
\usepackage{booktabs}       
\usepackage{amsfonts}       
\usepackage{nicefrac}       
\usepackage{microtype}      
\usepackage{xcolor}         


\usepackage{graphicx}
\usepackage{subfigure}
\usepackage{mathtools}
\usepackage{fontawesome}

\usepackage{hyperref}


\usepackage{amsmath}
\usepackage{amssymb}
\usepackage{mathtools}
\usepackage{amsthm}
\usepackage{algorithm}

\usepackage[capitalize,noabbrev]{cleveref}
\usepackage[small,compact]{titlesec}

\usepackage{siunitx}

	
\usepackage{color, colortbl}
\definecolor{Col3}{gray}{0.92}
\definecolor{Col1}{gray}{0.88}

\definecolor{Col2}{RGB}{252, 243, 230}
\usepackage{tikz}
\usetikzlibrary{shapes.callouts}

\newcommand\bestsingle[1]{\underline{#1}} 
\newcommand\bestzero[1]{\textbf{#1}}
\usepackage{adjustbox}

\usepackage{rotating}


\usepackage{xcolor}

\usepackage{cleveref} 
\usepackage{natbib}

\theoremstyle{plain}
\newtheorem{theorem}{Theorem}

\newtheorem{lemma}[theorem]{Lemma}

\theoremstyle{definition}

\newtheorem{assumption}[theorem]{Assumption}
\theoremstyle{remark}

\newtheorem{problem}{Problem}


\newcounter{phase}[algorithm]
\newlength{\phaserulewidth}
\newcommand{\setphaserulewidth}{\setlength{\phaserulewidth}}



\setphaserulewidth{.7pt}

\newcommand{\method}{CaML} 

\usepackage{cancel} 

%
\newcommand{\vx}{x}

\newcommand{\vw}{w}

\newcommand{\wj}{w^{(j)}}

\newcommand{\reals}{\mathbb{R}}

\newcommand{\indep}{\perp \!\!\!\!\!\!\! \perp}

    \setcounter{topnumber}{2}
    \setcounter{bottomnumber}{2}
    \setcounter{totalnumber}{2}     
    \setcounter{dbltopnumber}{2}    

\newcommand{\RR}{\mathbb{R}}

\newcommand{\cE}{\mathbb{E}}
\newcommand{\ff}{\mathcal{F}}
\newcommand{\gcal}{\mathcal{G}}

\newcommand{\ww}{\mathcal{W}}

\newcommand{\law}{\mathcal{L}}

\newcommand{\nn}{\mathcal{N}}

\newcommand{\qq}{\mathcal{Q}}

\newcommand{\xx}{\mathcal{X}}

\newcommand{\EE}[2][]{\mathbb{E}_{#1}\left[#2\right]}

\newcommand{\hf}{\hat{f}}

\newcommand{\hL}{\widehat{L}}

\newcommand{\hGamma}{\widehat{\Gamma}}

\newcommand{\ty}{\tilde{y}}

\newcommand{\tL}{\widetilde{L}}

\newcommand{\cond}{\,\big|\,}

\newcommand{\tightleftarrow}{\mathrel{\mkern-7.5mu\leftarrow\mkern-7.5mu}}

\newcommand{\tightersubsection}[1]{\subsection{#1}}

\usepackage[textsize=tiny]{todonotes}


\title{Zero-shot causal learning}

%

\author{%
Hamed Nilforoshan\thanks{Equal contribution. Code is available at: \url{https://github.com/snap-stanford/caml/}}$^{*1}$\quad 
Michael Moor$^{*1}$\quad 
Yusuf Roohani$^{2}$\quad
Yining Chen$^{1}$\quad \\
\textbf{Anja Šurina}$^{3}$ \quad
\textbf{Michihiro Yasunaga}$^{1}$\quad
\textbf{Sara Oblak}$^{4}$\quad
\textbf{Jure Leskovec}$^{1}$\quad
\vspace{0.3cm}\\
  $^{1}$Department of Computer Science, Stanford University \\
  $^{2}$Department of Biomedical Data Science, Stanford University\\
  $^{3}$School of Computer and Communication Sciences, EPFL \\
  $^{4}$Department of Computer Science, University of Ljubljana
\vspace{0.3cm}\\
Correspondence to: hamedn@cs.stanford.edu, mdmoor@cs.stanford.edu, jure@cs.stanford.edu
}

%
%
%

\begin{document}

\maketitle

\begin{abstract}  
Predicting how different interventions will causally affect a specific individual is important in a variety of domains such as personalized medicine, public policy, and online marketing. There are a large number of methods to predict the effect of an existing intervention based on historical data from individuals who received it. 
However, in many settings it is important to predict the effects of novel interventions (e.g., a newly invented drug), which these methods do not address.
Here, we consider zero-shot causal learning: predicting the personalized effects of a novel intervention. We propose CaML, a causal meta-learning framework which formulates the personalized prediction of each intervention's effect as a task. CaML trains a single meta-model across thousands of tasks, each constructed by sampling an intervention, its recipients, and its nonrecipients. By leveraging both intervention information (e.g., a drug's attributes) and individual features~(e.g., a patient's history), CaML is able to predict the personalized effects of novel interventions that do not exist at the time of training. Experimental results on real world datasets in large-scale medical claims and cell-line perturbations demonstrate the effectiveness of our approach. Most strikingly, \method's zero-shot predictions outperform even strong baselines trained directly on data from the test interventions. 
\end{abstract}

\section{Introduction}
\vspace{-0.5em}
Personalized predictions about how an intervention will causally affect a specific individual are important across many high impact applications in the physical, life, and social sciences. For instance, consider a doctor deciding whether or not to prescribe a drug to a patient. Depending on the patient, the same drug could either (a) cure the disease, (b) have no effect, or (c) elicit a life-threatening adverse reaction. Predicting which effect the drug will have for each patient  could revolutionize healthcare by enabling personalized treatments for each patient.

The causal inference literature formalizes this problem as conditional average treatment effects (CATE) estimation, in which the goal is to predict the effect of an intervention, conditioned on patient characteristics ($X$). When natural experiment data is available, consisting of individuals who already did and did not receive an intervention, a variety of CATE estimators exist to accomplish this task ~\cite{alaa2017bayesian,athey2016recursive,curth2021nonparametric,green2012modeling,hill2011bayesian,johansson2016learning,kunzel2019metalearners,nie2021quasi,kennedy2020optimal,shalit2017estimating}. These methods can then predict the effect of an \emph{existing} intervention ($W$) on a new individual ($X'$). 

However, in many real-world applications natural experiment data is entirely unavailable, and yet CATE estimation is critical. For instance, when new drugs are discovered, or new government policies are passed, 
it is important to know the effect of these novel interventions on individuals and subgroups in advance, i.e., before anybody is treated. There is thus a need for methods that can predict the effect of a \emph{novel} intervention ($W'$) on a new individual ($X'$) in a zero-shot fashion, i.e., without relying on \emph{any} historical data from individuals who received the intervention.

Generalizing to novel interventions is especially challenging because it requires generalizing across two dimensions simultaneously: to new interventions and new individuals. This entails efficiently ``aligning'' newly observed interventions to the ones previously observed in the training data.

\textbf{Present work.} Here, we first formulate the zero-shot CATE estimation problem. We then propose \method~(\textbf{Ca}usal \textbf{M}eta-\textbf{l}earning), a general framework for training a single meta-model to estimate CATE across many interventions, including novel interventions that did not exist at the time of model training (Figure~\ref{fig:overview1}). Our key insight is to frame CATE estimation for each intervention as a separate meta-learning task. For each task observed during training, we sample a retrospective natural experiment consisting of both (a) individuals who did receive the intervention, and (b) individuals who did not receive the intervention. This natural experiment data is used to estimate the effect of the intervention for each individual (using any off-the-shelf CATE estimator), which serves as the training target for the task.  

In order to achieve zero-shot generalization to new interventions, we include information ($W$) about the intervention (e.g., a drug's  attributes), in the task. We then train a single meta-model which fuses intervention information with individual-level features ($X$) to predict the intervention's effect ($\tau$). Our approach allows us to predict the causal effect of novel interventions, i.e., interventions without sample-level training data, such as a newly discovered drug (Figure~\ref{fig:overview1}). We refer to this capability as \emph{zero-shot causal learning}. 

In our experiments, we evaluate our method on two real-world datasets---breaking convention with the CATE methods literature which typically relies on synthetic and semi-synthetic datasets. Our experiments show that \method~is both scalable and effective, including the application to a large-scale medical dataset featuring tens of millions of patients. Most strikingly, \method's zero-shot performance exceeds even strong baselines that were trained directly on data from the test interventions. We further discover that \method~is capable of zero-shot generalization even under challenging conditions: when trained only on single interventions, at inference time it can accurately predict the effect of  combinations of novel interventions. Finally, we explain these findings, by proving a zero-shot generalization bound.

\section{Related work}
\label{sec:related_work}
\vspace{-0.5em}
We discuss recent work which is most closely related to zero-shot causal learning, and provide an extended discussion of other related work in Appendix \ref{sec:extended_related_work}.
Most CATE estimators do not address novel interventions, requiring that all considered interventions be observed during training. A notable exception is recent methods which estimate CATE for an intervention using structured information about its attributes~\cite{harada2021graphite,kaddour2021causal}. In principle, these methods can also be used for zero-shot predictions. These methods estimate CATE directly from the raw triplets $(W,X,Y)$, without considering natural experiments, by tailoring specific existing CATE estimators (the S-learner~\cite{kunzel2019metalearners} and Robinson decomposition~\cite{nie2021quasi}, respectively) to structured treatments. The main drawback of these approaches is that they are inflexible, i.e., they are restricted to using a single estimator and are unable to take advantage of the recent advances in the broader CATE estimation literature (e.g., recently developed binary treatment estimators~\cite{curth2021nonparametric,frauen2022estimating,konstantinov2022heterogeneous}). This is a limitation because any single CATE estimator can be unstable across different settings~\cite{curth2021doing}. Notably, the estimators which these methods build on have already been shown to result in high bias in many domains~\cite{kunzel2019metalearners,kennedy2020towards,chernozhukov2018double,curth2021nonparametric}. Likewise, we find that these methods struggle with zero-shot predictions~(Section \ref{sec:experiments}). \method's key difference from prior work is that we construct a separate task for each training intervention by synthesizing natural experiments. This allows us to (a) flexibly wrap any existing CATE estimator to obtain labels for each task, and thus take advantage of the most recent CATE estimation methods and (b) leverage meta-learning, which requires task-structured data. Consequently, \method~is able to achieve strong zero-shot performance (Section \ref{sec:experiments}).

\begin{figure}[t]
\centering
\includegraphics[width=0.9\textwidth]{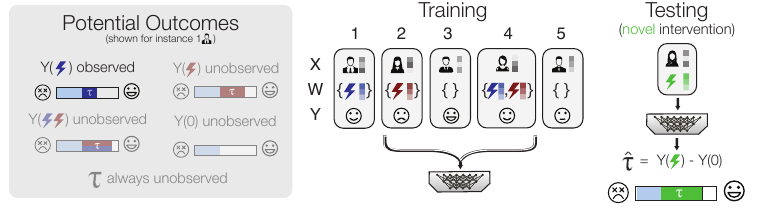}
\caption{Overview of the zero-shot causal learning problem. Each individual has features ($X$), an intervention with features ($W$), and an outcome ($Y$). Lightning bolts (\text{\faBolt}) represent interventions (\emph{e.g.} drugs). The personalized effect of an intervention ($\tau$) is always unobserved. The goal is to predict the $\tau$ for a novel intervention ($W'$) and individual ($X'$) that did not exist during training.
}
\label{fig:overview1}
\end{figure}

\section{Background: single-intervention CATE estimation}\vspace{-0.5em}
 \label{Sec:background}
Each task in the CaML framework consists of estimating conditional average treatment effects (CATEs) for a single binary treatment. In this section, we first provide background on CATE estimation under this simple case of a single treatment ($W$) and outcome ($Y$), and subsequently generalize it to our zero-shot setting. Under a single intervention and outcome, we consider $n$ independent observations $P_1, \dots, P_n$ drawn from a distribution $\mathcal{P}$. For unit $i = 1, ..., n$, $P_i = \left(W_i, X_i, Y_i\right) \sim \mathcal{P}$ collects: a binary or continuous outcome of interest $Y_i \in \mathcal{Y} \subset \reals$, instance features (i.e., pre-treatment covariates)  $X_i \in \mathcal{X} \subset \reals^d$, and a treatment-assignment indicator $W_i \in \{0,1\}$.
We use the Neyman-Rubin potential outcomes framework~\cite{imbens2015causal}, in which $Y_i(1), Y_i(0)$ reflect the outcome of interest either under treatment ($W_i = 1$), or under control ($W_i = 0$), respectively. In our running medical example, $Y_i(1)$ is the health status if exposed to the drug, and $Y_i(0)$ is the health status if not exposed to the drug. Notably, the \emph{fundamental problem of causal inference} is that we only observe one of the two potential outcomes, as $Y_i = W_i \cdot Y_i(1) + (1-W_i)\cdot Y_i(0)$ (e.g., either health status with or without drug exposure can be observed for a specific individual, depending on whether they are prescribed the drug). 
However, it is possible to make personalized decisions by estimating treatment effects that are tailored to the attributes of individuals (based on features $X$). Thus, we focus on estimating $\tau(x)$, known as the conditional average treatment effect (CATE):
\begin{align}
    \textrm{CATE} = \tau(x) =  \mathbb{E}_{\mathcal{P}} \Big[ Y(1) - Y(0) \mid X = x \Big]
\end{align}
A variety of methods have been developed to estimate $\tau(x)$ from observational data~\cite{curth2021nonparametric}. These rely on standard assumptions of unconfoundedness, consistency, and overlap~\cite{morgan2015counterfactuals}. \emph{Unconfoundedness}: there are no unobserved confounders, i.e. $Y_i(0), Y_i(1)\indep W_i \mid X_i$. \emph{Consistency}: $Y_i = Y_i(W_i)$, i.e.  treatment assignment determines whether $Y_i(1)$ or $Y_i(0)$ is observed. \emph{Overlap}: Treatment assignment is nondeterministic, such that for all $x$ in support of $X$: $0 < P(W_i=1\mid X_i = x)< 1 $.

\section{Zero-shot causal learning}
\label{sec:zero-shot}
In many real-world settings (\emph{e.g.} drugs, online A/B tests) novel interventions are frequently introduced, for which  no natural experiment data are available. These settings require zero-shot  CATE estimates. The zero-shot CATE estimation problem extends the prior section, except the intervention variable $W_i$ is no longer binary, but rather contains rich information about the intervention: $W_i \in \mathcal{W} \subset \reals^e$ (e.g., a drug's chemistry), where $W_i=0$ corresponds to a sample that did not receive any intervention. Thus, each intervention value $w$ has its own CATE function that we seek to estimate:
\begin{align}
    \textrm{CATE}_w = \tau_{w}(x) =  \mathbb{E}_{\mathcal{P}} \Big[ Y(w) - Y(0) \mid X = x \Big],
\end{align}
During training, we observe $n$ independent observations $P_1, \dots, P_n$ drawn from a distribution $\mathcal{P}$. Each $P_i = \left(W_i, X_i, Y_i  \right) \sim \mathcal{P}$. Let $\mathcal{W}_{seen}$ be set of all interventions observed during training. The zero-shot CATE estimation task consists of estimating CATE for a novel intervention that was never observed during training:

\vspace{0.5em}

\begin{problem}[Zero-shot CATE estimation]
\label{prob:scoring}
\textbf{Given} $n$ training observations $(W_1, X_1, Y_1),\dots,(W_n, X_n, Y_n)$ drawn from $\mathcal{P}$ containing intervention information, individual features, and outcomes... \textbf{estimate} $\tau_{w'}(x)$ for 
a novel intervention $w' \notin \mathcal{W}_{seen}$.  
\end{problem} 

This problem formulation extends in a straightforward manner to combinations of interventions, by allowing a single intervention $W_i$ to consist of a set of intervention vectors. \method~ supports combinations of interventions, as we elaborate on in Section \ref{sec:metadataset}

\begin{figure*}[t]
\centering
\includegraphics[width=1.0\textwidth]{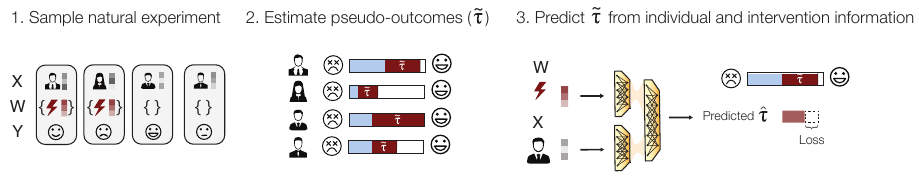}
\caption{Visual illustration of the \method~(causal meta-learning) framework. (1) We sample a task (i.e., an intervention) and a natural experiment from the training data consisting of individuals who either received the intervention~(W=\{{\color{purple} \text{\faBolt}}\}), or did not (W=\{\}). Each individual has features ($X$) and an outcome ($Y$), and the intervention also has information ($W$)~(e.g., a drug's attributes). (2) For each individual we estimate the effect of the intervention on the outcome (pseudo-outcomes $\tilde{\tau}$). (3) We predict an individual's pseudo-outcomes $\tilde{\tau}$ using a model that fuses $X$ and $W$. \method~is trained by repeating this procedure across many tasks and corresponding natural experiments.}
\label{fig:overview2}
\end{figure*}

\textbf{CaML overview.} We propose a novel framework for estimating CATE across multiple interventions, even including ones that were never encountered during training. Our framework consists of three key components~(Figure~\ref{fig:overview2}). First, we formulate CATE estimation as a meta-learning problem in which each task 
corresponds to the CATE estimation for a unique intervention. A task dataset for a given intervention is constructed by sampling a natural experiment of all individuals who received the intervention, and a sample of individuals who did not. Tasks are augmented with intervention information ($W$). Synthesizing these natural experiments allows us to compute a noisy CATE label $\Tilde{\tau}$  using any off-the-shelf estimator ($\Tilde{\tau}$ is referred to as pseudo-outcomes by the causal inference literature~\cite{curth2021nonparametric}). Finally, we train a single meta-model to predict these labels using individual-level ($X$) and intervention-level ($W$) information, such that it is able to generalize to novel tasks, i.e., estimating CATE for novel interventions.

The CaML framework incorporates three important design considerations: \emph{(1) Single meta-model}. In domains such as electronic health records and online marketing, we observe that large-scale datasets contain thousands of interventions with rich feature information ($W$). Instead of training a separate model for each intervention, CaML trains a single meta-model that can estimate CATE across all interventions. This approach lets us leverage shared structure across tasks and generalize to novel interventions that were not present during training.
\emph{(2) Pseudo-outcomes}. Instead of directly modeling the response surfaces $\mathbb{E}[Y(w)\mid X=x]$ and $\mathbb{E}[Y(0)\mid X=x]$, we use pseudo-outcomes for each intervention to train our model. This approach is informed by recent studies indicating bias in estimating CATE from direct predictions of observed outcomes~\cite{chernozhukov2018double,kunzel2019metalearners}. CaML outperforms strong baselines that meta-learn $Y(w)$ and $Y(0)$ directly, as demonstrated in our experiments (see Tables \ref{tab:mainResults} and \ref{tab:mainResults2}, rows S-learner and T-learner with meta-learning).
\emph{(3) Discrete tasks from continuous interventions}. CaML takes advantage of the extensive literature on CATE estimation for single, binary interventions. By creating a natural experiment for each intervention, CaML taps into this literature and benefits from the high performance of recently developed nonparametric CATE estimators~\cite{curth2021nonparametric,nie2021quasi,kunzel2019metalearners}.

CaML identifies CATE for novel interventions under the assumptions that: (1) for each observed intervention $w$, $\tau_w(x)$ is identifiable under the binary treatment assumptions (unconfoundedness, consistency, and overlap) in Section~\ref{Sec:background}. This allows for  valid training labels for each task. (2) $\tau_w(x) = \tau(w, x)$, i.e.,  a global function $\tau(w,x)$ unifies all intervention-specific CATE functions, (3) $\tau(w, x)$ is continuous in $w$. This allows the model to smoothly extrapolate the treatment effect to new interventions that are close to observed interventions in the intervention space. Lastly, (4) $W$ follows a continuous distribution. 

\tightersubsection{Meta-dataset}
\label{sec:metadataset}
We formulate CATE estimation as a meta-learning problem. For this, each task refers to CATE estimation for a distinct intervention. Interventions as well as tasks in our meta-dataset are jointly indexed by $j \in \mathbb{N}$ with $1 \leq j \leq K$, such that we can refer to the $j$-th intervention information with $\wj$.

We then construct a meta-dataset $D$ in the following way:
\begin{align} 
D &= \left\{ \left(D_{\textrm{treated}}^{(j)} \cup D_{\textrm{control}}^{(j)}, \wj \right) \right\}_{j=1}^{K}, \ \ \textrm{with} \\
D_{\textrm{treated}}^{(j)} &= \{ (X_i, Y_i) \mid W_i = \wj \} \ \textrm{and} \ \ 
D_{\textrm{control}}^{(j)} = \{ (X_i, Y_i) \mid W_i = 0) \}.
\end{align}
$D^{(j)}$ denotes the natural experiment dataset for task $j$, composed of a treated group (instances which received the intervention, i.e. $W_i = \wj$) and control group (instances which did not receive any intervention, i.e. $W_i = 0$). Each sample $i$ represents an individual, for which the quantities $(X_i, Y_i)$ are collected as introduced in Section~\ref{Sec:background}. In practice, we down-sample both groups (i.e. to 1 million samples for the treated and control groups) in our large-scale experiments. 

We augment each task dataset $D^{(j)}$ with intervention information, $\wj \in \reals^{e}$, for zero-shot generalization to new interventions \cite{kaddour2021causal, dejong1986explanation, yasunaga2021qagnn, koh2021wilds}. The form of $\wj$ varies with the problem domain --- for text interventions, it could be a language model's text embedding \cite{veitch2020adapting,weld2022adjusting,nilforoshan2018leveraging}, while biomedical treatments can be represented as nodes in a knowledge graph~\cite{chandak2022building,li2022graph}. Additionally, domain-specific features, like treatment categories from an ontology, may be included in $\wj$. To handle combinations of interventions (e.g., pairs of drugs), we aggregate the $w$ for each intervention using an order-invariant pooling operation~(we used the sum operator), and sample a separate natural experiment for individuals who received the full combination.

\tightersubsection{Estimating pseudo-outcomes}

We next estimate the training targets for each task (i.e. intervention) in the meta-dataset.  The training target ($\Tilde{\tau}^{(j)}$) is an unbiased, but noisy, estimate of CATE. More formally, for each task $j$ (which points to the natural experiment dataset for intervention $\wj$), we estimate $\Tilde{\tau}^{(j)}$, where $\mathbb{E}_{\mathcal{P}}[\Tilde{\tau}^{(j)}| X = x] = \tau_{\wj}(x)$. Thus, $\Tilde{\tau}_{i}^{(j)}$ denotes the target for the $i$-th sample in the $j$-th task~(indexing will be omitted when it is clear from context). We refer to these targets as pseudo-outcomes, following prior literature~\cite{curth2021nonparametric}. For prior work on pseudo-outcomes, refer to Appendix~\ref{sec:extended_related_work}. In Appendix \ref{supp:unbiasedness} we demonstrate why these pseudo-outcomes provide an unbiased training objective. For a detailed explanation on the necessity of using pseudo-outcomes instead of directly modeling $Y(w)$ and $Y(0)$, please see \cite{kunzel2019metalearners,curth2021nonparametric,chernozhukov2018double}.

\method~is agnostic to the specific choice of pseudo-outcome estimator. Thus, we assume a function $\eta(D^{(j)})$ which takes as input a task dataset $D^{(j)} \in D$ and returns a vector containing the pseudo-outcomes $\tilde{\tau}$ for each sample in the task. We extend each task dataset $D^{(j)}$ with the pseudo-outcomes, such that a sample holds the elements $\left(X_i, Y_i, \tilde{\tau}_i\right)$. Our key insight is that by collecting these pseudo-outcomes across multiple tasks, and predicting them using a combination of intervention and individual information ($W, X$) we can develop a CATE estimator which generalizes to novel interventions. In practice, we use the RA-learner~\cite{curth2021inductive} and treat pseudo-outcome estimation as a data pre-processing step (Appendix \ref{sec:pseudo-estimation}). 

\tightersubsection{Meta-model training}

Given $m$ target outcomes $Y_1,...,Y_m$ (e.g., different drug side effects), our goal is then to learn a model $\Psi_{\theta} \colon \reals^e \times \reals^d \to \reals^m$ that for parameters $\theta$ minimizes
\begin{align}
    \theta^{*} = \operatorname*{argmin}_\theta \ \mathbb{E}_{j \sim U(D)} \ \ \mathbb{E}_{W, X, \Tilde{\tau} \sim D^{(j)}} \left[ L\left(\Psi_{\theta}\right) \right] ,
    \label{Eq:objective}
\end{align}
where $U(D)$ denotes the discrete uniform distribution over the tasks of the meta-dataset $D$, and where $L(f)$ refers to a standard loss function between the pseudo-outcomes and the model output, i.e., $L(f) = \left(\Tilde{\tau} - f\left(\vw,\vx\right) \right)^2 $. To assess whether the model generalizes to novel tasks, we partition our meta-dataset by task, into non-overlapping subsets $D = D_{\textrm{train}} \cup D_{\textrm{val}} \cup D_{\textrm{test}}$.
During training, $\Psi_{\theta}$ is optimized on training tasks $D_{\textrm{train}}$. We validate and test this model on $D_{\textrm{val}}$ and $D_{\textrm{test}}$, which are thus unseen during training tasks. While the \method~framework is agnostic to a specific training strategy, we based our approach (Algorithm \ref{alg:cap}) on the Reptile meta-learning algorithm~\cite{nichol2018first} which we find performs better compared to straightforward empirical risk minimization (\emph{c.f.} Section \ref{sec:experiments}). For this, the objective is slightly modified to
\begin{align}
    \theta^{*} = \operatorname*{argmin}_\theta \ \mathbb{E}_{j \sim U(D)} \  \left[ L\left(A_{D^{j}}^{k} \left(\Psi_{\theta}\right)\right) \right] ,
    \label{Eq:reptile_objective}
\end{align}
where $A_{D}^k\colon \mathcal{F} \to \mathcal{F}$ represents the operator that updates a model $f \in \mathcal{F}$ using data sampled from the dataset $D$ for $k$ gradient steps. This operator is defined in more detail as the ADAPT routine in Algorithm~\ref{alg:cap}. Note that depending on the choice of CATE estimator, this routine iterates only over treated samples of a task dataset $D^{(j)}$ (as in our experiments), or over all samples, including untreated ones.

\tightersubsection{\method~architecture}

To parameterize $\Psi_{\theta}$, we propose a simple but effective model architecture~(see Section \ref{sec:experiments}):
\begin{align}
     \Psi_{\theta}\left(\vw, \vx \right) = \operatorname{MLP_1}([\tilde{\vw}; \tilde{\vx}]), \ \label{eq:model_fuse}   \textrm{with} \ 
    \tilde{\vx} = \operatorname{MLP_2}(\vx) \ \textrm{and} \
    \tilde{\vw} = \operatorname{MLP_3}(\vw), 
\end{align}
where $[\cdot\,;\cdot]$ denotes concatenation. 
Equation~\ref{eq:model_fuse} shows that the intervention information $\vw$ and individual features $\vx$ are encoded separately into dense vectors $\tilde{\vw}$ and $\tilde{\vx}$, respectively. Our MLPs consist of layers of the form $g\left(z\right) = z + \textrm{ReLU}( \textrm{Linear}(z))$.

\begin{algorithm}[t]
\footnotesize
\caption{The \method~algorithm}\label{alg:cap}
\begin{minipage}[t]{0.55\textwidth}
\begin{algorithmic}
\REQUIRE meta-dataset $D$, meta-model $\Psi_{\theta}$ with initialized parameters $\theta$, hyperparameter $k$.
    \FOR{iteration $= 1, 2, \dots, L $}
        \STATE{$j \leftarrow \textsc{SampleTask()} $ }
        \STATE{ $D_{\textrm{treat}}^{(j)},  D_{\textrm{ctrl}}^{(j)}, w^{(j)}  \leftarrow \textsc{QueryTaskData}(j)$}
\STATE{$\Tilde{\tau}^{(j)} \tightleftarrow$ \textsc{ EstimatePseudoOutcomes}($D_{\textrm{treat}}^{(j)},  D_{\textrm{ctrl}}^{(j)}$)}
        \STATE{ $\theta' \leftarrow$ \textsc{Adapt}(($D_{\textrm{treat}}^{(j)}, D_{\textrm{ctrl}}^{(j)}), \Tilde{\tau}^{(j)},w^{(j)}, \Psi_{\theta}$, $k$) }
        \STATE{$g \leftarrow \theta - \theta^\prime$} \COMMENT{Reptile gradient}
        \STATE{ $\theta \leftarrow \theta - \beta g$ } \COMMENT{Gradient step for meta-model $\Psi_{\theta}$}
    \ENDFOR
\STATE  \textbf{return} $\Psi_{\theta}$ \\
\end{algorithmic}
\end{minipage}
\hfill
\begin{minipage}[t]{0.45\textwidth}
\begin{algorithmic}
\FUNCTION{\textsc{Adapt}(Data $D$, Pseudo-outcomes $\Tilde{\tau}$, Intervention information $w$, Model $\Psi_{\theta}$, \# of Steps $k$)}
\STATE $\Psi_{\theta}^\prime \leftarrow$ Create copy of $\Psi_{\theta}$ 
\FOR{s = $1, 2, \dots, k$ }
    \STATE Draw batch of size $b$ from $D$.
    \STATE Compute loss $\mathcal{L}_s$ by feeding instances through model, conditioned on task:
    \STATE $\mathcal{L}_s = \frac{1}{b}\sum_{i=1}^b \left(\Tilde{\tau}_i - \Psi_{\theta}^\prime(w_i, x_i ) \right)^2$
    \STATE Update parameters of $\Psi_{\theta}^\prime$:
    \STATE $\theta \leftarrow \theta - \alpha \nabla \mathcal{L}_s $
\ENDFOR
\ENDFUNCTION 
\end{algorithmic}
\end{minipage}
\end{algorithm}

\section{Theoretical analysis} \label{sec:theory}
We now consider zero-shot causal learning from a theoretical perspective. Under simplified assumptions, we bound the prediction error in the zero-shot setting.

We formulate the setting as a supervised learning problem with noisy labels (pseudo-outcomes) where we learn a smooth function $f=\Psi( \vw, \vx) \rightarrow \tau$ among a family $\ff$. We focus on $\tau \in \RR$, and assume $\tau \in [0,1]$ without loss of generality, since we can normalize $\tau$ to this range. The training dataset has $n$ interventions with $m$ samples each, i.e. 
first $n$ \emph{i.i.d.} draws from $P_W$: $\vw^{(1)},\dots,\vw^{(n)}$ and then for each $\vw^{(j)}$, $m$ \emph{i.i.d.} draws from $P_X$: $\vx_1^{(j)},\dots,\vx_m^{(j)}$.

The main theorem quantifies the rate that combining information across different interventions helps with zero-shot performance. We prove a finite-sample generalization bound for the ERM variant of \method. The ERM is a special case of \textsc{Adapt} with $k=1$ that is more conducive to rigorous analysis. The advantage of Reptile over ERM is orthogonal and we refer the readers to the original discussion~\cite{nichol2018reptile}.
We assume the estimated pseudo-outcomes $\Tilde{\tau}$ during training satisfy $\Tilde{\tau}=\tau+\xi$ where $\xi$ is an independent zero-mean noise with $|\xi| \le \epsilon$ almost surely for some $\epsilon \ge 0$,
\begin{align*}
    \hat{f} = \min_{f \in \ff} \hat{L}(f) = \min_f \frac{1}{nm}\sum_{j=1}^n \sum_{i=1}^m (f(\vw^{(j)}, \vx_i^{(j)})-\Tilde{\tau}_i^{(j)})^2.
\end{align*} The test error is $L(f) = \cE_{W,X,\tau}[(f(\vw, \vx)-\tau)^2]$. Let $f^* = \min_f L(f)$. We bound the excess loss $L(\hat{f})-L(f^*)$.
Our key assumption is that interventions with similar features $W$ have similar effects in expectation. We assume that all functions in our family are smooth with respect to $W$, i.e., $\forall f \in \ff, \cE_{W,X} \left[ \left\| \partial f / \partial W \right\|_2^2 \right] \le \beta^2$.
\begin{theorem}
\label{thm:main_sketch}
    Under our assumptions, with probability $1-\delta$,
    \begin{align*}
        L(\hf) &\le L(f^*) + 8(1+\epsilon) R_{nm}(\ff) + 8\sqrt{\frac{(1+\epsilon) R_{nm}(\ff)\log(1/\delta)}{n}}+ \frac{2\log(1/\delta)}{3n}+ \\ & (1+\epsilon) \sqrt{\frac{(32 C \beta^2+2(1+\epsilon)^2/m) \log{(1/\delta)}}{n}}
    \end{align*}
\end{theorem}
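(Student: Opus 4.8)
The plan is to treat the ERM as empirical risk minimization over $n$ i.i.d.\ \emph{tasks} rather than $nm$ i.i.d.\ samples. Each per-task loss $\hat\ell_j(f) := \frac{1}{m}\sum_{i=1}^m (f(\vw^{(j)},\vx_i^{(j)})-\tilde\tau_i^{(j)})^2$ is an i.i.d.\ draw over the joint randomness in $\vw^{(j)}$, the $\vx_i^{(j)}$, and the noise, and $\hat L(f)=\frac1n\sum_j \hat\ell_j(f)$. Since $\xi$ is independent and zero-mean with $|\xi|\le\epsilon$, expanding the square kills the cross term and gives $\mathbb{E}[\hat\ell_j(f)]=L(f)+\sigma_\xi^2$ with $\sigma_\xi^2:=\mathbb{E}[\xi^2]\le\epsilon^2$ a constant independent of $f$, so this additive term cancels in any excess-risk comparison. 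Using $\hat L(\hf)\le\hat L(f^*)$ yields the standard decomposition
\begin{align*}
L(\hf)-L(f^*)\le \sup_{f\in\ff}\big[(L(f)+\sigma_\xi^2)-\hat L(f)\big] + \big[\hat L(f^*)-(L(f^*)+\sigma_\xi^2)\big].
\end{align*}
Both terms are averages of $n$ i.i.d.\ quantities lying in $[0,(1+\epsilon)^2]$, so \emph{all} rates are paced by $n$, not $nm$ --- which is precisely the statement that zero-shot generalization is governed by the number of distinct interventions seen.

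For the uniform (first) term I would bound its expectation by symmetrizing at the task level, introducing one Rademacher sign $\sigma_j$ per task. A block-comparison step that replaces the shared sign on the $m$-term average by independent per-sample signs, followed by Talagrand's contraction lemma for the $2(1+\epsilon)$-Lipschitz map $z\mapsto(z-\tilde\tau)^2$, converts this into $O((1+\epsilon)R_{nm}(\ff))$ evaluated on all $nm$ points (the block comparison is what yields the sharper $R_{nm}$ rather than the weaker $R_n$). To pass from expectation to high probability I would apply Bousquet's form of Talagrand's inequality to the nonnegative empirical process $\Phi:=\sup_f[(L(f)+\sigma_\xi^2)-\hat L(f)]$, whose per-task increments have range $(1+\epsilon)^2/n$; bounding the process variance by $\mathbb{E}[\Phi]$ (and hence by the Rademacher complexity) delivers the $8(1+\epsilon)R_{nm}(\ff)$, $8\sqrt{(1+\epsilon)R_{nm}(\ff)\log(1/\delta)/n}$, and $\frac{2\log(1/\delta)}{3n}$ terms.

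For the single-function (second) term I would apply Bernstein's inequality to the $n$ i.i.d.\ variables $\hat\ell_j(f^*)\in[0,(1+\epsilon)^2]$, producing $\sqrt{2\operatorname{Var}(\hat\ell_1(f^*))\log(1/\delta)/n}$ plus a lower-order $\log(1/\delta)/n$ piece (absorbed into the term above). The crux is the law-of-total-variance split conditioning on $\vw^{(1)}$: the within-task part $\mathbb{E}_{\vw}\operatorname{Var}(\hat\ell_1\mid\vw)$ is $O((1+\epsilon)^4/m)$ because $\hat\ell_1$ averages $m$ conditionally i.i.d.\ bounded terms (the $2(1+\epsilon)^2/m$ contribution), while the across-task part $\operatorname{Var}_{\vw}\big(\mathbb{E}[\hat\ell_1\mid\vw]\big)$ is where smoothness enters. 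Writing $h(\vw):=\mathbb{E}_{X,\tau}[(f^*(\vw,X)-\tau)^2\mid\vw]$, Cauchy--Schwarz with $f^*,\tau\in[0,1]$ gives $\|\nabla_{\vw}h\|^2\lesssim \mathbb{E}_X\|\partial f^*/\partial\vw-\partial\tau/\partial\vw\|^2\lesssim\beta^2$, and a Poincar\'e-type inequality for $P_W$ with constant $C$ then gives $\operatorname{Var}_{\vw}(h(\vw))\le 32C\beta^2$ up to the stated constants. Collecting the two variance contributions with the $(1+\epsilon)$ Bernstein prefactor reproduces the final term.

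I expect the main obstacle to be this across-task variance bound, i.e.\ converting the gradient smoothness $\mathbb{E}\|\partial f/\partial W\|_2^2\le\beta^2$ into control of $\operatorname{Var}_{\vw}(h(\vw))$ at the $\beta^2/n$ rate. This requires (i) a smoothness assumption on $\tau$ (or on the residual $f^*-\tau$) beyond the one stated for $\ff$, (ii) the Cauchy--Schwarz step trading residual magnitude against gradient, and (iii) an appropriate Poincar\'e/concentration property of $P_W$ supplying the constant $C$. This is the only place where the continuity-in-$W$ assumption does genuine work and where the thesis that ``similar interventions have similar effects'' becomes quantitative. A secondary technical point is the block-to-sample Rademacher comparison that credits the model for the $m$ within-task samples and produces $R_{nm}(\ff)$ in place of $R_n(\ff)$.
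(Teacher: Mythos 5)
Your overall architecture matches the paper's: cancel the noise contribution $\cE[\xi^2]$, concentrate at the level of the $n$ i.i.d.\ tasks via Bousquet's form of Talagrand's inequality, split the per-task variance by the law of total variance into a within-task $O((1+\epsilon)^4/m)$ piece and an across-task piece controlled by a Poincar\'e inequality applied to the gradient smoothness, and convert the expected supremum into $R_{nm}(\ff)$ by symmetrization with per-sample signs followed by a contraction lemma for the $2(1+\epsilon)$-Lipschitz squared loss. The constants you target are the right ones.

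The step that does not work as stated is your handling of the uniform term. You propose to apply Bousquet's inequality to $\Phi=\sup_{f}\bigl[(L(f)+\sigma_\xi^2)-\hat{L}(f)\bigr]$ while ``bounding the process variance by $\cE[\Phi]$,'' so that the $32C\beta^2+2(1+\epsilon)^2/m$ term enters only through a separate Bernstein bound on the single function $f^*$. Bousquet's inequality requires a bound on the weak variance $\sup_{f\in\ff}\operatorname{Var}\bigl(\hat{\ell}_1(f)\bigr)$, and this cannot in general be replaced by $\cE[\Phi]$; that substitution needs a Bernstein-class or localization condition that is not assumed here and would not produce the $C\beta^2$ term in any case. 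The paper's route is to bound $\sup_{f\in\ff}\operatorname{Var}\bigl(\hat{\ell}_1(f)\bigr)\le 4(1+\epsilon)^2C\beta^2+(1+\epsilon)^4/(4m)$ directly---which is exactly your law-of-total-variance computation, valid uniformly over $\ff$ because the smoothness assumption $\cE_{w,x}[\|\partial f/\partial w\|_2^2]\le\beta^2$ is imposed on every $f\in\ff$---and to feed that into the $\sigma^2$ slot of Bousquet's inequality applied to the two-sided supremum $2\sup_f|\tL(f)-\hat{L}(f)|$, which yields all five terms in one stroke with no separate treatment of $f^*$. So the fix is immediate, but the variance/smoothness term genuinely lives inside the uniform deviation bound, not in a pointwise Bernstein step. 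A smaller divergence: for the across-task gradient you differentiate the full conditional risk and therefore need smoothness of $\tau$ in $w$, which you correctly flag as an assumption beyond those stated; the paper instead differentiates under the integral so that only $\partial f/\partial w$ appears, giving $\nabla_w\cE[g\mid w]=2\,\cE\bigl[(f(w,x)-\tilde{y})\,\partial f/\partial w\bigr]$ and hence the bound $4(1+\epsilon)^2\beta^2$ from the hypothesis-class smoothness alone.
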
 where $R_{nm}$ is a novel notion of zero-shot Rademacher complexity defined in equation~\eqref{eq:rademacher}; $C$ is a Poincaré constant that only depends on the distribution of $W$.
For large $n,m$, the leading terms are the function complexity $R_{nm}(\ff)$, and an $O(\sqrt{1/n})$ term with a numerator that scales with $\beta$ and $(1+\epsilon)^2/m$. This validates our intuition that when the intervention information $W$ is more informative of the true treatment effects (smaller $\beta$), and when the estimation of $\tau$ in the training dataset is more accurate, the performance is better on novel interventions.
Please refer to Section~\ref{sec:appendix-proof} for the full proof. Compared to standard generalization bound which usually has a $\sqrt{1/n}$ term, our main technical innovation involves bounding the variance by the smoothness of the function class plus Poincaré-type inequalities. When $\beta$ is much smaller than $1$ we achieve a tighter bound.


\begin{table*}[h]   
\hspace*{-0.7cm}
\centering
\footnotesize
{\renewcommand{\arraystretch}{1.15}
\begin{tabular}{lm{0.1\textwidth}m{0.2\textwidth}m{0.2\textwidth}m{0.15\textwidth}m{0.15\textwidth}}
Dataset &  Samples & Features ($X$) & Outcome ($Y$) & Intervention type & Intervention information ($W$)\\
\midrule
Claims & Patients & Patient history (binned counts of medical codes) & Pancytopenia onset & Drug intake (prescription) & Drug embedding (knowledge graph) \\
LINCS & Cell lines & Cancer cell encyclopedia & Expression of landmark genes~(DEG) & Perturbagen (small molecule) & Molecular embeddings (RDKit) \\
\bottomrule
\end{tabular}
}
\caption{High-level overview of our two experimental settings. Details in Appendix~\ref{sec:exp-setup}.}
\label{tab:datasets}
\end{table*}

\section{Experiments}   
\label{sec:experiments}
We explore to what extent zero-shot generalization is practical when predicting the effects of interventions. We thus design two novel evaluation settings using real-world data in domains where zero-shot CATE estimation will be highly impactful: (1) Health Insurance Claims: predicting the effect of a drug on a patient, and (2) LINCS: predicting the effect of a perturbation on a cell. We use new datasets because existing causal inference benchmarks~\cite{hill2003sustained,shimoni2018benchmarking} focus on a single intervention. By contrast, zero-shot causal learning must be conceptualized in a multi-intervention setting. 

\textbf{Zero-shot Evaluation}.
Each task corresponds to estimating CATE for a single intervention, across many individual samples (e.g. patients). We split all tasks into meta-training/meta-validation, and a hold-out meta-testing set for evaluating zero-shot predictions (Table \ref{tab:mainResults}, unseen drugs for Claims and Table \ref{tab:mainResults2}, unseen molecular perturbations in LINCS). For the Claims dataset, we also consider the challenging setting of combinations of unseen drugs (Table \ref{tab:pairs-results}). 

Each meta-validation and meta-testing task contains a natural experiment of many samples (e.g., patients) who received the unseen intervention, and many control samples who did not receive the intervention. The same patient (Claims) or cell-line (LINCS) can appear in multiple tasks (if they received different interventions at different times). Thus, to ensure a fair zero-shot evaluation, we exclude all samples who have ever received a meta-testing intervention from meta-val/meta-train. Similarly, we exclude all meta-validation patients from meta-train. Details on holdout selection are provided in Appendix~\ref{sec:appendix-experiments-holdout}.

Table~\ref{tab:datasets} gives an overview of both benchmarks. In the Claims dataset, we compare zero-shot predictions with strong single-intervention baselines which cannot generalize to unseen interventions. To do so, we further split each task in meta-validation and meta-testing into a train/test (50/50) split of samples. These baselines are trained on a task's train split, and all methods are evaluated on the test split of the meta-testing tasks. On the LINCS dataset, as each task consists of $< 100$ cells, single-intervention baselines performed weakly and are excluded from analysis. 

\textbf{Baselines.} We compare the zero-shot performance of \method~to two distinct categories of baselines. (1) \emph{Trained directly on test interventions}. These are strong CATE estimators from prior work and can only be trained on a single intervention. Thus, we train a single model on each meta-testing task's train split, and evaluate performance on its test split. This category includes T-learner~\cite{kunzel2019metalearners}, X-learner~\cite{kunzel2019metalearners}, RA-learner~\cite{curth2021nonparametric}, R-learner~\cite{nie2021quasi}, DragonNet~\cite{shi2019adapting}, TARNet~\cite{shalit2017estimating}, and FlexTENet~\cite{curth2021inductive}. 

(2) \emph{Zero-shot} baselines are trained across all meta-training tasks and are able to incorporate intervention information~($W$). These methods are thus, in principle, capable of generalizing to unseen interventions. We use GraphITE~\cite{harada2021graphite} and Structured Intervention Networks (SIN)~\cite{kaddour2021causal}. We also introduce two strong baselines which learn to directly estimate $Y(w)$ and $Y(0)$ by meta-learning across all training interventions, without using pseudo-outcomes: S-learner and T-learner with meta-learning. These extend the S-learner and T-learner from prior work~\cite{kunzel2019metalearners} to incorporate intervention information~($W$) in their predictions. We elaborate on implementation details of baselines in Appendix~\ref{sec:appendix-baselines}. For details on hyperparameter search and fair comparison, see Appendix \ref{sec:exp-setup}.

\textbf{Ablations.} In our first ablation experiment (w/o meta-learning), we trained the \method~model without meta-learning, instead using the standard empirical risk minimization (ERM) technique~\cite{vapnik1991principles}. Our second ablation (w/o RA-learner) assesses the sensitivity of \method's performance to different pseudo-outcome estimation strategies. For further details on how these ablation studies were implemented, see Appendix \ref{sec:ablations}. We discuss the key findings from these ablations in Section~\ref{sec:keyfindings}.

\tightersubsection{Setting 1: Personalized drug side effect prediction from large-scale medical claims}

Our first setting (Claims) is to predict the increased likelihood of a life-threatening side effect  caused by a drug prescription. We leverage a large-scale insurance claims dataset of over 3.5 billion claims across 30.6 million patients in the United States\footnote{Insurance company undisclosed per data use agreement.}. Each datestamped insurance claim contains a set of diagnoses (ICD-10 codes), drug prescriptions (DrugBank ID), procedures (ICD-10 codes), and laboratory results (LOINC codes). Laboratory results were categorized by whether the result was high, low, normal, abnormal (for non-continuous labs), or unknown. 

Interventions are administration of one drug~($n=745$), or two drugs~($n=22{,}883$) prescribed in combination. Time of intervention corresponds to the \emph{first} day of exposure. Intervention information ($W$) was  generated from pre-trained drug embeddings from a large-scale biomedical knowledge graph~\cite{chandak2022building} (Appendix~\ref{sec:appendix-experiments}). We compute drug combination embeddings as the sum of the embeddings of the constituent drugs. We focus on the binary outcome ($Y$) of the occurrence of the side effect pancytopenia within 90 days of intervention exposure. Pancytopenia is a deficiency across all three blood cell lines~(red blood cells, white blood cells, and platelets). Pancytopenia is life-threatening, with a 10-20\% mortality rate~\cite{khunger2002pancytopenia,kumar2001pancytopenia}, and is a rare side effect of many common medications~\cite{kuhn2016sider} (\emph{e.g.} arthritis and cancer drugs), which in turn require intensive monitoring of the blood work. Following prior work~\cite{guo2022ehr}, patient medical history features ($X$) were constructed by time-binned counts of each unique medical code (diagnosis, procedure, lab result, drug prescription) at seven different time scales before the drug was prescribed, resulting in a total of 443,940 features. For more details, refer to Appendix~\ref{sec:exp-setup}.

\textbf{Metrics} We rely on best practices for evaluating CATE estimators in observational data, as established by recent work~\cite{yadlowsky2021evaluating, chernozhukov2018generic}, which recommend to assess treatment rules by comparing subgroups across different quantiles of estimated CATE. We follow the high vs. others RATE (rank-weighted average treatment effect) approach from Yadlowsky et. al~\cite{yadlowsky2021evaluating}, which computes the difference in average treatment effect (ATE) of the top $u$ percent of individuals (ranked by predicted CATE), versus all individuals (for more details, see Appendix~\ref{sec:exp-setup}).
For instance, RATE @ 0.99 is the difference between the top 1\% of the samples (by estimated CATE) vs. the average treatment effect (ATE) across all samples, which we would expect to be high if the CATE estimator is accurate. Note that estimates of RATE can be negative if model predictions are inversely associated with CATE. We elaborate on the RATE computation in Appendix \ref{sec:exp-setup}.

The real-world use case of our model is preventing drug prescription for a small subset of high-risk individuals. Thus, more specifically, for each task $j$, intervention $w_j$ in the meta-dataset, and meta-model  $\Psi_{\theta}$, we compute $RATE\: @\: u$ for each $u$ in $[0.999, 0.998, 0.995, 0.99]$ across individuals who received the intervention.  We use a narrow range for $u$ because pancytopenia is a very rare event occurring in less than 0.3\% of the patients in our dataset. Hence, in a real-world deployment scenario, it is necessary to isolate the small subset of high-risk patients from the vast majority of patients for whom there is no risk of pancytopenia onset. 

Additionally, because our meta-testing dataset consists of individuals treated with drugs known to cause pancytopenia, observational metrics of recall and precision are also a rough \emph{proxy} for successful CATE estimation (and highly correlated to RATE, Table \ref{tab:mainResults}). Thus, as secondary metrics, we also compute $Recall\: @\: u$ and $Precision\: @\: u$ for the same set of thresholds as RATE, where a positive label is defined as occurrence of pancytopenia after intervention.

\begin{table*}[t]   
\centering
\footnotesize
\begin{adjustbox}{max width=10.9cm}
{\renewcommand{\arraystretch}{1.15}
\hspace{-0.2cm}
\begin{tabular}{@{}l@{\;}|@{\;\;}c@{\;}@{\;}c@{\;}@{\;}c@{\;\;}@{\;}c@{\;\;}|@{\;}c@{\;}@{\;}c@{\;}@{\;}c@{\;}@{\;}c|@{\;}c@{\;}@{\;}c@{\;}@{\;}c@{\;}@{\;\;}c@{}}
 & \multicolumn{4}{@{\;}c@{\;}|@{\;}}{RATE @$u$ \ ($\uparrow$)} & \multicolumn{4}{@{\;}c@{\;}|@{\;}}{ Recall @$u$ \ ($\uparrow$)} & \multicolumn{4}{c}{Precision @$u$ \ ($\uparrow$)}\\
{} & $0.999$& $.998$&$0.995$ & $0.99$ & $0.999$& $0.998$&$0.995$ & $0.99$&$0.999$& $0.998$&$0.995$ & $0.99$ \\
\hline
Random &0.00 &0.00 &0.00 &0.00&0.00&0.00&0.01&0.00&0.00&0.00&0.00&0.00\\
\hline
\rowcolor{Col1}[.22\tabcolsep][2.7mm]
T-learner&0.32 &0.26 &0.16 &0.10 &0.12&0.18&0.26&0.31&0.36&0.29&0.18&0.11\\
\rowcolor{Col1}[.22\tabcolsep][2.7mm]
X-learner&0.06 &0.05 &0.04 &0.03 &0.02&0.04&0.08&0.12&0.09&0.07&0.06&0.05\\
\rowcolor{Col1}[.22\tabcolsep][2.7mm]
R-learner&0.19 &0.17 &0.12 &0.08 &0.06&0.1&0.19&0.26&0.24&0.21&0.15&0.11\\ 
\rowcolor{Col1}[.22\tabcolsep][2.7mm]
RA-learner&\bestsingle{0.47} &\bestsingle{0.37} &\bestsingle{0.23} &\bestsingle{0.14} &\bestsingle{0.17}&\bestsingle{0.26}&\bestsingle{0.38}&\bestsingle{0.45}&\bestsingle{0.54}&\bestsingle{0.42}&\bestsingle{0.26}&\bestsingle{0.16}\\
\rowcolor{Col1}[.22\tabcolsep][2.7mm]
DragonNet&0.09&0.07&0.05&0.04&0.03&0.05&0.08&0.11&0.15&0.12&0.08&0.06\\
\rowcolor{Col1}[.22\tabcolsep][2.7mm]
TARNet&0.15&0.12&0.07&0.05 &0.05&0.08&0.12&0.14&0.18&0.15&0.09&0.06\\
\rowcolor{Col1}[.22\tabcolsep][2.7mm]
FlexTENet&0.10&0.09&0.06&0.04&0.04&0.06&0.11&0.16&0.15&0.13&0.09&0.06\\
\rowcolor{Col2}[.22\tabcolsep][2.7mm]
\tikz[overlay, remember picture,anchor=base] \node (Center){};GraphITE & 0.19&0.12&0.05 &0.03 &0.07&0.08&0.09&0.10&0.23&0.14&0.07&0.04\\
\rowcolor{Col2}[.22\tabcolsep][2.7mm]
SIN & 0.00 &0.00 &0.00 &0.00 &0.00&0.00&0.01&0.02&0.01&0.01&0.01&0.01\\
\rowcolor{Col2}[.22\tabcolsep][2.7mm]
S-learner w/ meta-learning &0.21&0.16&0.09&0.05&0.08&0.11&0.15&0.16&0.25&0.18&0.1&0.06\\
\rowcolor{Col2}[.22\tabcolsep][2.7mm]
T-learner w/ meta-learning &0.40&0.31&0.18&0.11&0.15&0.22&0.32&0.38&0.45&0.35&0.21&0.13\\
\hline
\rowcolor{Col2}[.22\tabcolsep][2.7mm]
\method~- w/o meta-learning &0.39&0.31&0.18&0.11&0.15&0.22&0.32&0.39&0.45&0.35&0.22&0.14\\
\rowcolor{Col2}[.22\tabcolsep][2.7mm]
\method~- w/o RA-learner&0.45&0.36&0.22&\bestzero{0.14}&0.16&0.24&0.34&0.41&0.48&0.38&0.26&\bestzero{0.16}\\
\rowcolor{Col2}[.22\tabcolsep][2.7mm]
\method~
(ours)&\bestzero{0.48}&\bestzero{0.38}&\bestzero{0.23}&0.13&\bestzero{0.18}&\bestzero{0.27}&\bestzero{0.38}&\bestzero{0.45}&\bestzero{0.54}&\bestzero{0.43}&\bestzero{0.26}&0.16\\
\end{tabular}
}
\end{adjustbox}
\caption{Performance results for the Claims dataset (predicting the effect of drug exposure on pancytopenia onset from patient medical history). Key findings are (1) \method~outperforms all zero-shot baselines (RATE is 18-27\% higher than T-Learner w/ meta-learning, the strongest zero-shot baseline) (2) \method~performs stronger (up to 8$\times$ higher RATE values) than 6 of the 7 baselines which are trained directly on the test interventions, and performs comparably to the strongest baseline trained directly on the test interventions (RA-learner). Mean is reported across all runs; standard deviations included in (Appendix Table \ref{tab:mainResultsSTD}). Analogous trends hold for generalization to \emph{pairs} of unseen drugs (Appendix Table \ref{tab:pairs-results}).
}
\label{tab:mainResults}
\end{table*}

\vskip-3mm
\tightersubsection{Setting 2: Cellular gene expression response due to perturbation}
Our second setting (LINCS) is to predict how a cell's gene expression ($Y$) will respond to intervention from perturbagen (small molecule compound such as a drug). This is a critical problem as accurately predicting intervention response will accelerate drug-discovery. We use data for 10,325 different perturbagens from the LINCS Program~\cite{Subramanian2017lincs}. Each perturbagen corresponds to a different small molecule. 
Molecular embeddings were generated using the RDKit featurizer~\cite{landrum2006rdkit} and used as intervention information ($W$). 
Outcomes~($Y$) of interest are post-intervention gene expression across the top-$50$ and top-$20$ differentially expressed landmark genes (DEGs) in the LINCS dataset. We did not look at all 978 genes since most do not show significant variation upon perturbation. We use $19{,}221$ features ($X$) from the Cancer Cell Line Encyclopedia (CCLE)~\cite{Ghandi2019ccle} to characterize each cell-line ($n=99$), each of which correspond to unperturbed gene expression measured in a different lab environment using a different experimental assay.
For more details, see Appendix~\ref{sec:exp-setup}.

\textbf{Metrics}. A key advantage of experiments on cells is that at evaluation time we can observe both $Y(0)$ and $Y(1)$ for the same cell line $X$, through multiple experiments on clones of the same cell-line in controlled lab conditions. In the LINCS dataset, $Y(0)$ is also measured for all cells which received an intervention. Thus, we can directly compute the precision in estimating heterogeneous effects~(PEHE) on all treated cells in our meta-testing dataset, an established measure for CATE estimation performance analogous to mean-squared error~\cite{hill2011bayesian}~(see Appendix~\ref{sec:exp-setup}).

\begin{tikzpicture}[remember picture, overlay, note/.style={rectangle callout, fill=#1,draw=black, rounded corners}, shift={(0cm, 6.5cm)}]
\node[right=0cm, below=0cm of Center, text=black, text width=1.8cm, align=left, font=\footnotesize] [note=Col1, callout absolute pointer={(12.7,10.3)}] at (13.9,11.8) {\textbf{Trained\\ on test\\intervention}};
\node[right=0cm, below=0cm of Center, text=black, align=left, font=\small] [note=Col2, callout absolute pointer={(12.75,8.4)}] at (13.55,9.2) {\textbf{Zero-shot}};
\node[right=0cm, below=0cm of Center, text=black, align=left, font=\scriptsize] at (13.5,8.4) {(Best in bold)};
\node[right=0cm, below=0cm of Center, text=black, text width=1.8cm, align=left, font=\scriptsize] at (13.75,10.3) {(Best underlined)};
\end{tikzpicture}
\vspace{-2em}
\tightersubsection{Key findings}
\label{sec:keyfindings}
\emph{\method's zero-shot predictions outperform baselines with direct access to the target intervention.} In the medical claims setting, single intervention baselines (Tables~\ref{tab:mainResults}, dark grey rows) are the highest performing baselines as we train them directly on the meta-test intervention. Still, \method~outperforms 6 out of 7 of these baselines (up to 8$\times$ higher RATE) and achieves comparable performance to the strongest of these baselines, the RA-learner. Furthermore, \method~ strongly outperforms alternative zero-shot CATE estimators (RATE is 18-27\% higher than T-Learner w/ meta-learning, the strongest zero-shot baseline). In the LINCS data, multi-intervention learners are strongest as there are only a small number of instances~(cell lines) per intervention\footnote{Single-task baselines excluded from Table \ref{tab:mainResults2}: all performed similar or worse than mean baseline due to low task sample size.}. \method~outperforms both single-intervention and multi-intervention learners by drawing from both of their strengths---it allows us to use strong CATE estimation methods (i.e. the RA-learner) which previously were restricted to single interventions, while sharing information across multiple interventions.

\emph{\method~learns to generalize from single interventions to combinations of unseen interventions (drug pairs).} We evaluate \method's performance in the challenging setting of predicting the personalized effects of combinations of two drugs which are both unseen during training, while only training on interventions consisting of single drugs. \method~achieves strong performance results~(see Appendix Table~\ref{tab:pairs-results}), surpassing the best baseline trained on the test tasks, and outperforms all zero-shot baselines, across all 12 metrics. 

\emph{Understanding \method's performance results.} Our ablation studies explain that \method's performance gains are due to (1) our meta-learning formulation and algorithm (in contrast to the w/o meta-learning row, in which ERM is used to train the model), and (2) the flexible CATE estimation strategy, allowing to take advantage of recently developed CATE estimators previously restricted to single interventions (in contrast to the w/o RA-learner row, in which an alternative pseudo-outcome estimator is used). Lastly, (3) comparison to existing binary intervention CATE estimators trained separately on each meta-testing intervention~(Table \ref{tab:mainResults}, grey rows) shows that we gain from learning from thousands interventions. See Appendix \ref{sec:ablations} for details on ablations.

\begin{table}[t]
\footnotesize
\centering
{\renewcommand{\arraystretch}{1.15}
\begin{adjustbox}{max width=8.0cm}
\begin{tabular}{@{}l@{\;}@{\;\;}c@{\;} @{\;\;}c@{\;}}
 & PEHE 50 DEGs~($\downarrow$) & PEHE 20 DEGs~($\downarrow$)\\
\hline
\hline
Mean.  &3.78  &4.11 \\
\hline
\rowcolor{Col2}[.25\tabcolsep][3.2mm]
GraphITE &  3.58 $\pm$ 0.023 &3.82 $\pm$ 0.011   \\
\rowcolor{Col2}[.25\tabcolsep][3.2mm]
SIN & 3.78$\pm$ 0.001  &4.06 $\pm$ 0.001   \\
\rowcolor{Col2}[.25\tabcolsep][3.2mm]
S-learner w/ meta-learning & 3.63 $\pm$ 0.004  &3.90 $\pm$ 0.004   \\
\rowcolor{Col2}[.25\tabcolsep][3.2mm]
T-learner w/ meta-learning &	3.61 $\pm$ 0.007  &3.85 $\pm$ 0.006   \\
\hline

\rowcolor{Col2}[.25\tabcolsep][3.2mm]
\method~- w/o meta-learning & 3.57 $\pm$ 0.006 &3.79 $\pm$ 0.004 \\
\rowcolor{Col2}[.25\tabcolsep][3.2mm]
\method~- w/o RA-learner & 4.28 $\pm$ 0.517 &4.60 $\pm$ 0.413 \\
\rowcolor{Col2}[.25\tabcolsep][3.2mm]
\method~(ours) & \bestzero{3.56} $\pm$ 0.001 &\bestzero{3.78} $\pm$ 0.005  \\ 
\end{tabular}
\end{adjustbox}
}
\vspace{3mm}
\caption{Performance results for the LINCS dataset (predicting the effect of an unseen perturbation on the gene expression of an unseen cell-line). \method~ outperforms all baselines. Improvement is largest for the 20 most differentially expressed genes, where most signal is expected.}
\label{tab:mainResults2}
\end{table}

\section{Conclusion}
We introduce a novel approach to predict the effects of novel interventions. \method~consistently outperforms state-of-the-art baselines, by unlocking zero-shot capability for many recently developed CATE estimation methods which were previously restricted to studying single interventions in isolation. While our study is limited to retrospective data, we plan to prospectively validate our findings. Future work includes designing new model architectures and CATE estimators which learn well under the \method~framework, developing new metrics to evaluate zero-shot CATE estimators, as well as more generally exploring novel learning strategies that enable zero-shot causal learning. 

\textbf{Societal impacts}. In high-stakes decision-making inaccurate predictions can lead to severe consequences. It is important not to overly rely on model predictions and proactively involve domain experts, such as doctors, in the decision-making process. Additionally, it is crucial to ensure that underserved communities are not disadvantaged by errors in treatment effect estimates due to underrepresentation in the training data. Important avenues for achieving equitable CATE estimation in future work include process-oriented approaches (i.e., evaluating model errors for underserved demographics), and outcome-oriented methods (i.e., gauging model impacts on demographic utility) ~\cite{corbett2023measure, nilforoshan2022causal, seyyed2021underdiagnosis, althoff2022large, nilforoshan2023human}. Furthermore, the deployment of CATE models could raise privacy concerns. These models typically require access to individual patient data to estimate personalized treatment effects accurately. Ensuring the privacy and security of this sensitive information is crucial to avoid potential data breaches or unauthorized access, which could harm patients and erode public trust in healthcare systems.

\newpage

\section*{Acknowledgements}
We are deeply grateful to Stefan Wager for his invaluable insights and extensive contributions to our discussions. We thank Emma Pierson, Kexin Huang, Kaidi Cao, Yanay Rosen, Johann Gaebler, Maria Brbić, Kefang Dong, June Vuong for helpful conversations. H.N was supported by a Stanford Knight-Hennessy Scholarship and the National Science Foundation Graduate Research Fellowship under Grant No. DGE-1656518. M.M. was supported by DARPA N660011924033 (MCS), NIH NINDS R61 NS11865, GSK, Wu Tsai Neurosciences Institute. A.S and S.O were supported by the American Slovenian Education Foundation (ASEF) fellowship. M.Y was supported by the Microsoft Research PhD fellowship. Y.R was supported by funding from GlaxoSmithKline LLC. Y.C. was supported by Stanford Graduate Fellowship and NSF IIS 2045685. We also gratefully acknowledge the support of Stanford HAI for Google Cloud Credits,
DARPA under Nos. HR00112190039 (TAMI), N660011924033 (MCS);
ARO under Nos. W911NF-16-1-0342 (MURI), W911NF-16-1-0171 (DURIP);
NSF under Nos. OAC-1835598 (CINES), OAC-1934578 (HDR), CCF-1918940 (Expeditions), 
NIH under No. 3U54HG010426-04S1 (HuBMAP),
Stanford Data Science Initiative, 
Wu Tsai Neurosciences Institute,
Amazon, Docomo, GSK, Hitachi, Intel, JPMorgan Chase, Juniper Networks, KDDI, NEC, and Toshiba.

\newpage

\bibliography{main}
\bibliographystyle{plainnat} 

\newpage
\appendix

\section{Zero-shot Rademacher complexity and Proof of Theorem 1}
\label{sec:appendix-proof}
\subsection{Problem setup and assumptions}
Let $w \in \ww \subseteq \RR^e$ denote an intervention and $x \in \mathcal{X}  \subseteq \RR^d$ denote an individual that received it. Assume the outcome to predict is a scalar $y \in [0,1]$. The hypothesis class is $\mathcal{F} =\{f: (w,x) \rightarrow y\}$. The dataset has $n$ interventions with $m$ independent units which received each intervention, i.e. first $n$ \emph{i.i.d.} draws from $P_W$ and then $m$ \emph{i.i.d.} draws from $P_X$ for each $w^{(j)}$. During training we have access to noisy estimate $\ty = y + \xi$ where $\xi$ is an independent noise with $\cE \xi = 0$ and $|\xi| \le \epsilon$ almost surely. We are tested directly on $y$.

The ERM is 
\begin{align*}
    \hf= \min_f \hat{L}(f) = \min_f \frac{1}{nm}\sum_{j=1}^n \sum_{i=1}^m (f(w^{(j)}, x_i^{(j)})-\ty_i^{(j)})^2.
\end{align*}

The test error is \begin{align}
    L(f) = \cE_{w,x,y} (f(w, x)-y)^2
\end{align} and let $f^* = \min_f L(f)$.

We are interested in bounding the excess error $L(\hf)-L(f^*)$.

Our key assumption is that interventions with similar attributes ($w$) have similar effects in expectation. More concretely, we assume that all hypotheses in our family are smooth with respect to $w$:
\begin{assumption}
\label{ass:lipschitz}
\begin{align*}
\forall f \in \ff, \cE_{w,x} \left[ \left\| \frac{\partial f}{\partial w} \right\|_2^2 \right] \le \beta^2.
\end{align*}
\end{assumption}

Furthermore, we assume that $P_W$ satisfies a Poincaré-type inequality:
\begin{assumption}
\label{ass:X}
For some constant $C$ that only depends on $P_W$, for any smooth function $F$,
\begin{align*}
    Var_w[F(w)] \le C \cE \left[\|\nabla_w F(w)\|_2^2\right].
\end{align*}
\end{assumption}

For example, $P_W$ can be any of the following distributions:
\begin{itemize}
    \item Multivariate Gaussian: $w \in \RR^e \sim \nn(\mu,\Sigma)$ for some vector $\mu \in \RR^e$ and positive semi-definite matrix $\Sigma \in \RR^{e \times e}$;
    \item $w \in \RR^e$ has independent coordinates; each coordinate has the symmetric exponential distribution $1/2 e^{-|t|}$ for $t\in \RR$.
    \item $P_W$ is a mixture over base distributions satisfying Poincaré inequalities, and their pair-wise chi-squared distances are bounded.
    \item $P_W$ is a mixture of isotropic Gaussians in $\RR^e$.
    \item $P_W$ is the uniform distribution over $\ww \subset \RR^e$, which is open, connected, and bounded with Lipschitz boundary.
\end{itemize}

 We note that isotropic Gaussian can approximate any smooth densities in $\RR^e$~\cite{kostantinos2000gaussian} (since RBF kernels are universal), showing that Assumption 3 is fairly general.
We define a novel notion of function complexity specialized to the zero-shot setting. Intuitively, it measure how well we can fit random labels, which is first drawing $n$ interventions and $m$ recipients for each intervention. For examples of concrete upper bound on zero-shot Rademacher complexity see section~\ref{sec:appendix-rademacher}.
\begin{align}
    R_{nm}(F) = \frac{1}{nm} \cE_{w,x,\sigma} \sup_f  \sum_{j=1}^n \sum_{i=1}^m \sigma_i^j f(w^{(j)}, x_i^{(j)})
    \label{eq:rademacher}
\end{align} where $\sigma_i^j$ are independently randomly drawn from $\{-1, 1\}$.

\subsection{Formal theorem statement}
\begin{theorem}
\label{thm:formal}
    Under Assumptions~\ref{ass:lipschitz}~\ref{ass:X}, with probability $1-\delta$,
    \begin{align*}
        L(\hf) \le L(f^*) + 8(1+\epsilon) R_{nm}(\ff) + 8\sqrt{\frac{(1+\epsilon) R_{nm}(\ff)\log(1/\delta)}{n}} \\ +(1+\epsilon) \sqrt{\frac{(32 C \beta^2 + \frac{2(1+\epsilon)^2}{m}) \log{(1/\delta)}}{n}}+\frac{2\log{(1/\delta)}}{3n}.
    \end{align*}
\end{theorem}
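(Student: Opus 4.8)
The plan is to treat this as a standard excess-risk bound for empirical risk minimization, with the crucial twist that the effective sample size for generalizing \emph{across} interventions is $n$, the number of interventions, rather than $nm$. First I would note that because $\xi$ is independent and zero-mean, the noisy population risk $\tilde{L}(f) := \cE_{w,x,\xi}[(f(w,x)-y-\xi)^2]$ equals $L(f)+\cE[\xi^2]$, a constant shift; hence $f^*$ also minimizes $\tilde L$ and $L(\hf)-L(f^*)=\tilde L(\hf)-\tilde L(f^*)$. Combining this with the ERM optimality $\hat L(\hf)\le\hat L(f^*)$ reduces the excess risk to the one-sided uniform deviation $\sup_{f\in\ff}(\tilde L(f)-\hat L(f))$ plus the single-function fluctuation $\hat L(f^*)-\tilde L(f^*)$; the latter concentrates at the same rate and can be folded into the constants.

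The second step is to view $\hat L(f)=\frac1n\sum_{j=1}^n g_j(f)$ as an average of $n$ i.i.d.\ per-intervention blocks $g_j(f)=\frac1m\sum_{i=1}^m (f(w^{(j)},x_i^{(j)})-\ty_i^{(j)})^2$, each valued in $[0,(1+\epsilon)^2]$, and to apply Talagrand's (Bousquet's) concentration inequality to $Z=\sup_f(\tilde L(f)-\hat L(f))$. This produces exactly the three-term shape of the bound: (i) the expectation $\cE Z$; (ii) a deviation $\sqrt{\sigma^2\log(1/\delta)/n}$ with $\sigma^2=\sup_f \operatorname{Var}[g_j(f)]$ the worst-case per-block variance; and (iii) a range term of order $\log(1/\delta)/n$. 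For (i) I would symmetrize at the sample level, which is valid since within each block the original and ghost samples form exchangeable pairs, and then invoke Talagrand's contraction lemma, using that the squared loss is $2(1+\epsilon)$-Lipschitz (with $f\in[0,1]$ and $\ty\in[-\epsilon,1+\epsilon]$). This yields $\cE Z\lesssim(1+\epsilon)R_{nm}(\ff)$, reproducing both the $R_{nm}(\ff)$ term and the cross term $\sqrt{(1+\epsilon)R_{nm}(\ff)\log(1/\delta)/n}$, the latter being the $\cE Z$ contribution to Bousquet's variance proxy.

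The main obstacle, and the technical heart of the argument, is bounding the per-block variance $\sigma^2$ so that the resulting term decays like $\sqrt{1/n}$ with the stated $\beta$- and $m$-dependence. Here I would use the law of total variance conditioning on $w^{(j)}$: writing $\psi(w,f)=\cE_{x,\xi}[(f-y-\xi)^2\mid w]$, we get $\operatorname{Var}[g_j(f)]=\operatorname{Var}_w[\psi(w,f)]+\tfrac1m\,\cE_w[\operatorname{Var}_{x,\xi}[(f-y-\xi)^2\mid w]]$. The within-block term is $O((1+\epsilon)^4/m)$ by boundedness, which supplies the $2(1+\epsilon)^2/m$ contribution once the overall $(1+\epsilon)$ prefactor is accounted for. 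For the between-block term, the additive noise variance is constant in $w$ and drops out, so that $\psi(\cdot,f)$ effectively equals $\cE_x[(f-y)^2]$ up to a constant; I would then apply the Poincaré inequality of Assumption~\ref{ass:X} to get $\operatorname{Var}_w[\psi]\le C\,\cE_w\|\nabla_w\psi\|_2^2$. The chain rule gives $\nabla_w\psi=2\,\cE_x[(f-y)\nabla_w(f-y)]$, and bounding $|f-y|\le1$, applying Cauchy--Schwarz, and using $\|\nabla_w(f-y)\|_2^2\le 2\|\nabla_w f\|_2^2+2\|\nabla_w y\|_2^2$ reduces everything to $\cE_{w,x}\|\partial f/\partial w\|_2^2\le\beta^2$ from Assumption~\ref{ass:lipschitz}, applied to both $f$ and the smooth true-effect function $y$ (which lies in, or is bounded like, $\ff$). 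This yields the $32C\beta^2$ term up to constants.

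Collecting ingredients (i)--(iii) with these variance estimates and the range $b\le(1+\epsilon)^2$ assembles the four stated terms. The delicate points I expect to spend the most care on are pinning down the Lipschitz and contraction constants (to get the factor $8$ and the powers of $(1+\epsilon)$ exactly), verifying that the smoothness assumption can legitimately be invoked for $\nabla_w(f-y)$ and not merely for $\nabla_w f$, and checking that Bousquet's inequality is normalized so that the per-block range contributes precisely $\tfrac{2\log(1/\delta)}{3n}$.
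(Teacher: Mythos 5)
Your proposal follows essentially the same route as the paper's proof: the same noise-shift reduction $L(\hf)-L(f^*)=\tL(\hf)-\tL(f^*)$, Bousquet's concentration inequality applied to the $n$ per-intervention blocks, symmetrization plus a $2(1+\epsilon)$-Lipschitz contraction to reach $R_{nm}(\ff)$ and the cross term, and the law of total variance combined with the Poincaré inequality of Assumption~\ref{ass:X} to bound the per-block variance by a quantity of order $C\beta^2+(1+\epsilon)^4/m$. The only substantive divergence is in the Poincaré step: you differentiate the label as well, via $\nabla_w(f-y)$, and hence need the true-effect function to be $\beta$-smooth in $w$, whereas the paper differentiates only $f$ (treating the conditional law of $(x,\ty)$ as not depending on $w$), which is how it arrives at the constant $32$ without that extra requirement --- your computation is arguably the more careful one when $y$ genuinely depends on $w$, but it inflates the constant and quietly adds an assumption the theorem does not state.
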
 

\subsection{Proof of the main theorem}
We define the population loss on the noisy label $\tL(f) = \cE_{w,x,\ty} \left(f(w,x)-\ty\right)^2$. Due to independence of $\xi$, $\cE_{w,x,y,\xi} (f(w,x)-y-\xi)^2 = \cE_{w,x,y} (f(w,x)-y)^2+\cE [\xi^2] = L(f)+\cE [\xi^2]$ for any $f$, so
$L(\hf)-L(f^*) = \tL(\hf)-\tL(f^*)$. We shall focus on bounding the latter.

We first need a lemma that bounds the supremum of an empirical process indexed by a bounded function class.
\begin{lemma}[Theorem 2.3 of~\cite{bousquet2002bennett}]
\label{lem:bennett}
Assume that $X_j$ are identically distributed according to $P$, $\gcal$ is a countable set of functions from $\xx$ to $\RR$ and, and all $g \in \gcal$ are $P$-measurable, square-integrable, and satisfy $\cE[g]=0$. Suppose $\sup_{g \in \gcal} \|g\|_\infty \le 1$, and we denote $Z=\sup_g \left| \sum_{j=1}^n g(X_j) \right|$. Suppose $\sigma^2 \ge \sup_{g \in \gcal} Var(g(X_j))$ almost surely, the for all $t \ge 0$, we have
\begin{align*}
    \Pr \left[Z \ge \cE Z +\sqrt{2t (n \sigma^2 + 2 \cE Z)}+\frac{t}{3}\right] \le e^{-t}.
\end{align*}
\end{lemma}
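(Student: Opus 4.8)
The statement is Bousquet's form of Talagrand's concentration inequality for the supremum of an empirical process, so the plan is to prove it by the \emph{entropy method}: I would establish a Bennett-type bound on the logarithmic moment generating function $\psi(\lambda) = \log \cE[e^{\lambda(Z-\cE Z)}]$, and then recover the stated tail bound by Chernoff's inequality together with the standard inversion of the Bennett function. Two preliminary reductions clean up the statement. Since $\sup_{g\in\gcal}\big|\sum_j g(X_j)\big| = \sup_{g\in\gcal\cup(-\gcal)}\sum_j g(X_j)$, and the enlarged class $\gcal\cup(-\gcal)$ still consists of mean-zero functions with $\|g\|_\infty\le 1$ and variance at most $\sigma^2$, I may drop the absolute value and assume $Z=\sup_g\sum_{j=1}^n g(X_j)$. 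The countability of $\gcal$ is used throughout to guarantee measurability of $Z$ and to work with exact maximizers of the supremum without measurable-selection subtleties.

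First I would fix the leave-one-out structure. For each coordinate $i$, set $Z^{(i)} = \sup_{g}\sum_{j\ne i} g(X_j)$, which is independent of $X_i$. Letting $g^\star$ and $g^{(i)}$ denote maximizers of $Z$ and $Z^{(i)}$, the elementary sandwich $g^{(i)}(X_i) \le Z - Z^{(i)} \le g^\star(X_i)$ holds, so the hypothesis $\|g\|_\infty \le 1$ yields the pointwise increment bound $0 \le Z - Z^{(i)} \le 1$. These increments, together with the variance budget $\sigma^2 \ge \sup_g \Var{g(X_j)}$, are the only data that will enter the estimate.

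The analytic core is tensorization plus a modified logarithmic Sobolev inequality. Applying subadditivity of entropy to the nonnegative variable $e^{\lambda Z}$ gives $\mathrm{Ent}(e^{\lambda Z}) \le \sum_{i=1}^n \cE[\mathrm{Ent}_i(e^{\lambda Z})]$, where $\mathrm{Ent}_i$ is the entropy taken in $X_i$ with the other coordinates frozen. Since $Z^{(i)}$ does not depend on $X_i$ and $Z\ge Z^{(i)}$, each conditional entropy obeys, with $\phi(u)=e^u-u-1$,
\[
\mathrm{Ent}_i(e^{\lambda Z}) \le \cE_i\!\left[e^{\lambda Z}\,\phi\big(-\lambda (Z-Z^{(i)})\big)\right].
\]
Using $\phi(-u)\le\phi(u)$ and the monotonicity of $x\mapsto \phi(x)/x^2$ on $(0,\infty)$, together with $0\le Z-Z^{(i)}\le 1$ and $\lambda\ge 0$, I would dominate $\phi(-\lambda(Z-Z^{(i)})) \le (Z-Z^{(i)})^2\phi(\lambda)$, which reduces the whole right-hand side to $\phi(\lambda)\sum_i\cE[(Z-Z^{(i)})^2 e^{\lambda Z}]$. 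The crux — and the step I expect to be the main obstacle — is controlling this variance proxy sharply, by the quantity $v = n\sigma^2 + 2\cE Z$. Bounding $(Z-Z^{(i)})^2 \le g^\star(X_i)^2$ and then $\sum_i g^\star(X_i)^2 \le \sup_g \sum_i g(X_i)^2$, the remaining task is to show $\cE[\sup_g \sum_i g(X_i)^2] \le n\sigma^2 + 2\cE Z$. This is the delicate part: one centers $g(X_i)^2$ by $\cE[g^2]=\Var{g(X_i)}\le\sigma^2$, contributing the $n\sigma^2$ term, and then applies symmetrization followed by the Ledoux--Talagrand contraction principle for the $2$-Lipschitz map $t\mapsto t^2$ on $[-1,1]$ to relate the centered supremum of squares back to $\cE Z$, producing the $2\cE Z$ term. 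It is precisely this symmetrization-based control of the variance (rather than a crude bounded-differences estimate) that yields the Bennett, as opposed to merely sub-Gaussian, structure; the subtlety of carrying the $e^{\lambda Z}$ tilt through this argument while keeping the clean bound $v$ is where the care is concentrated.

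Finally I would integrate the resulting differential inequality. Writing $H(\lambda)=\cE[e^{\lambda Z}]$ and using $\mathrm{Ent}(e^{\lambda Z}) = \lambda H'(\lambda) - H(\lambda)\log H(\lambda)$, the estimates above become a first-order inequality for $\psi$ whose solution is the Bennett bound $\psi(\lambda) \le v\,(e^{\lambda}-\lambda-1)$ for all $\lambda \ge 0$. Chernoff's inequality then gives $\Pr[Z-\cE Z \ge s] \le \exp(-v\,h(s/v))$ with $h(u)=(1+u)\log(1+u)-u$, and the standard sub-gamma inequality $h(u)\ge u^2/(2+2u/3)$ — equivalently, the inversion $s=\sqrt{2vt}+t/3$ — produces exactly
\[
\Pr\!\left[Z \ge \cE Z + \sqrt{2t\,(n\sigma^2+2\cE Z)} + \tfrac{t}{3}\right] \le e^{-t},
\]
which is the claimed bound since $v=n\sigma^2+2\cE Z$.
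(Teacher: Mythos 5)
The first thing to note is that the paper contains no proof of this statement: it is imported verbatim as Theorem~2.3 of Bousquet (2002) and used as a black box inside the proof of the generalization bound, so your proposal can only be measured against Bousquet's original argument. Your scaffolding is the right family of techniques --- tensorization of entropy, a modified logarithmic Sobolev inequality with leave-one-out variables $Z^{(i)}$, a Bennett bound $\psi(\lambda)\le v\,\phi(\lambda)$, then Chernoff plus the inversion $h(u)\ge u^2/(2+2u/3)$ --- and your reduction to a one-sided supremum via $\gcal\cup(-\gcal)$ and your final integration/inversion step are both correct. But the core of the argument, the control of the variance proxy by $v=n\sigma^2+2\mathbb{E}Z$, has a genuine gap, in three places. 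First, the sandwich $g^{(i)}(X_i)\le Z-Z^{(i)}\le g^{\star}(X_i)$ gives only $-1\le Z-Z^{(i)}\le 1$, not $0\le Z-Z^{(i)}$: adding the $i$-th term can decrease the supremum. Consequently $(Z-Z^{(i)})^2\le g^{\star}(X_i)^2$ fails; you only get $(Z-Z^{(i)})^2\le\max\{g^{\star}(X_i)^2,\,g^{(i)}(X_i)^2\}$, and the $g^{(i)}$ contribution (a different maximizer for each $i$) cannot be absorbed into a single $\sup_g\sum_i g(X_i)^2$. Second, even granting that reduction, the symmetrization-plus-contraction route cannot produce the stated constant: symmetrization costs a factor $2$, the contraction principle for $t\mapsto t^2$ on $[-1,1]$ another factor $2$, and desymmetrizing the Rademacher average back to $\mathbb{E}Z$ yet another factor $2$, so one obtains $\mathbb{E}\sup_g\sum_i g(X_i)^2\le n\sigma^2+c\,\mathbb{E}Z$ with $c$ of order $8$, i.e.\ a Talagrand-type inequality with strictly worse constants (this is essentially Massart's 2000 argument), not Bousquet's $n\sigma^2+2\mathbb{E}Z$. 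Third, the entropy bound actually requires the \emph{tilted} inequality $\sum_i\mathbb{E}[(Z-Z^{(i)})^2 e^{\lambda Z}]\le v\,\mathbb{E}[e^{\lambda Z}]$; an in-expectation bound on $\sup_g\sum_i g(X_i)^2$ does not transfer through the correlated weight $e^{\lambda Z}$. You explicitly flag this as ``where the care is concentrated,'' but you propose no mechanism for it, and within your route there is none that preserves the constants.

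Bousquet's actual proof avoids quadratic domination altogether, and this is the missing idea. It exploits the mean-zero hypothesis structurally --- in your sketch $\mathbb{E}[g]=0$ is used only to identify $\mathbb{E}[g^2]$ with the variance, whereas Bousquet uses $\mathbb{E}[\,g^{(i)}(X_i)\mid X^{(i)}]=0$ to make the linear terms in the entropy bound vanish in conditional expectation, leaving exactly the variance budget $n\sigma^2$ --- together with the convexity bound $\phi(-\lambda x)\le x\,\phi(-\lambda)$ for $x\in[0,1]$ (linear, not quadratic, in the increment) and the self-bounding relation $\sum_i(Z-Z^{(i)})\le\sum_i g^{\star}(X_i)=Z$, which is the precise source of the $2\mathbb{E}Z$ term; a Rio-type recursion on the resulting differential inequality then yields $\psi(\lambda)\le v\,\phi(\lambda)$ with $v=n\sigma^2+2\mathbb{E}Z$ exactly. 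So as written, your plan would prove \emph{a} concentration inequality of Bennett type for $Z$, but not this lemma with these constants; to claim the statement as given you would need to follow Bousquet's finer argument (or cite it, as the paper does).
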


We apply Lemma~\ref{lem:bennett} with $X_j = (w^{(j)}, x_1^j, \dots, x_m^j, \ty_1^j, \dots, \ty_m^j)$, $g(X_j) =\left(\frac{1}{m}\sum_i (f(w^{(j)}, x_i^{(j)})-\ty_i^{(j)})^2 - \tL(f)\right)$, $\sigma^2 = \sup_{f \in \ff} (Var(\frac{1}{m}\sum_i (f(w^{(j)}, x_i^{(j)})-\ty_i^{(j)})^2))$, $t = \log(1/\delta)$. Since $f-\ty \in [-1,1]$, $g \in [-1,1]$. With probability $1-\delta$,
\begin{align*}
    n \sup_f \left| \hL(f) -  \tL(f)\right| &\le n \cE \sup_f \left| \hat{L}(f) -  \tL(f)\right| + \sqrt{2 \log{\frac{1}{\delta}}\left(n\sigma^2+2n \cE \sup_f \left| \hat{L}(f) -  \tL(f)\right|\right)}+\frac{1}{3}\log{\frac{1}{\delta}}.
\end{align*} Multiplying both sides by $1/n$, and using $\sqrt{a+b} \le \sqrt{a}+\sqrt{b}$,
\begin{align}
    \sup_f \left| \hL(f) -  \tL(f)\right| \le \cE \sup_f \left| \hL(f) -  \tL(f)\right| + 2\sqrt{\frac{\cE \sup_f \left| \hL(f) -  \tL(f)\right| \log(1/\delta)}{n}} + \sqrt{\frac{2\sigma^2 \log{(1/\delta)}}{n}}+\frac{\log{(1/\delta)}}{3n}.
    \label{eq:bousquet}
\end{align}

The next lemma bounds the variance $\sigma^2$ in equation~\eqref{eq:bousquet}.

\begin{lemma}
    \label{lem:var}
    \begin{align*}
        \forall f \in \ff, Var_{w^{(j)}, x_{1 \dots m}^j, \ty_{1 \dots m}^j }\left[\frac{1}{m} \sum_{i=1}^m (f(w^{(j)}, x_i^{(j)})-\ty_i^{(j)})^2\right] \le 4(1+\epsilon)^2 C \beta^2 + \frac{(1+\epsilon)^4}{4m}.
    \end{align*}
\end{lemma}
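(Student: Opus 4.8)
The plan is to decompose the variance with the law of total variance, conditioning on the single intervention draw $w = w^{(j)}$. Writing $\bar{Z}(w) = \frac{1}{m}\sum_{i=1}^m (f(w,x_i^{(j)})-\ty_i^{(j)})^2$, I would use
$$Var[\bar Z] = \cE_w\!\left[Var(\bar Z \mid w)\right] + Var_w\!\left[\cE(\bar Z\mid w)\right],$$
and bound the two terms separately. The first term captures the sampling noise over the $m$ recipients of a fixed intervention, and the second captures the variability across interventions; these are exactly the sources of the $(1+\epsilon)^4/(4m)$ and $4(1+\epsilon)^2 C\beta^2$ contributions, respectively.

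For the first term, note that conditional on $w$ the pairs $(x_i^{(j)},\ty_i^{(j)})$ are i.i.d., so the summands $(f(w,x_i^{(j)})-\ty_i^{(j)})^2$ are i.i.d. as well, and $Var(\bar Z\mid w) = \frac{1}{m}Var\big((f(w,x_1)-\ty_1)^2\mid w\big)$. Since $f$ and $y$ both lie in $[0,1]$ and $|\xi|\le\epsilon$, each residual satisfies $|f-\ty|\le 1+\epsilon$, so the squared residual lies in $[0,(1+\epsilon)^2]$. Popoviciu's inequality (the variance of a random variable supported on an interval of length $\ell$ is at most $\ell^2/4$) then gives $Var(\bar Z\mid w)\le (1+\epsilon)^4/(4m)$ uniformly in $w$, hence the same bound after taking $\cE_w$.

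For the second term, set $F(w) = \cE(\bar Z\mid w) = \cE_{x,\ty}[(f(w,x)-\ty)^2\mid w]$, a deterministic function of $w$, and apply the Poincaré inequality of Assumption~\ref{ass:X}: $Var_w[F(w)] \le C\,\cE_w[\|\nabla_w F(w)\|_2^2]$. Differentiating under the expectation gives $\nabla_w F(w) = \cE_{x,\ty}[2(f(w,x)-\ty)\,\nabla_w f(w,x)]$, and Jensen's inequality $\|\cE V\|_2^2\le\cE\|V\|_2^2$ moves the norm inside, so that $\|\nabla_w F(w)\|_2^2 \le 4\,\cE_{x,\ty}[(f-\ty)^2\,\|\nabla_w f\|_2^2] \le 4(1+\epsilon)^2\,\cE_x[\|\nabla_w f(w,x)\|_2^2]$, using the residual bound once more. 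Taking $\cE_w$ and invoking the smoothness Assumption~\ref{ass:lipschitz}, $\cE_{w,x}[\|\partial f/\partial w\|_2^2]\le\beta^2$, yields $Var_w[F]\le 4(1+\epsilon)^2 C\beta^2$. Adding the two bounds gives the claim.

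The main obstacle is the gradient step for $F$: I must justify exchanging $\nabla_w$ with the expectation over $(x,\ty)$ and, more delicately, ensure the conditional law of the label $\ty$ contributes no extra $w$-dependent term when differentiating, so that $\nabla_w F$ is expressed purely through $\nabla_w f$ and the residual factor $(f-\ty)^2$ can be pulled out by its almost-sure bound after the Jensen step. Once the gradient is in that form, the combination of Poincaré, Jensen, the bound $|f-\ty|\le 1+\epsilon$, and Assumption~\ref{ass:lipschitz} is routine; the conceptual content is entirely in recognizing that the cross-intervention variance is controlled by the smoothness of the hypothesis class through a Poincaré-type inequality.
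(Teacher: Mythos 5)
Your proposal is correct and follows essentially the same route as the paper's proof: the law of total variance conditioning on $w$, Popoviciu's inequality for the within-intervention sampling term, and the Poincar\'e inequality combined with differentiation under the integral, Jensen, the residual bound $|f-\ty|\le 1+\epsilon$, and Assumption~\ref{ass:lipschitz} for the across-intervention term. The Jensen step you make explicit is genuinely needed (the paper writes that passage as an equality), and the subtlety you flag about the conditional law of $\ty$ depending on $w$ is likewise present in, and silently assumed away by, the paper's own computation, which writes the density as $p(x,\ty)$ with no $w$-dependence.
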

\begin{proof}[Proof of Lemma~\ref{lem:var}]

Using the law of total variance, if we write 
\begin{align*}
    g(w^{(j)}, x_{1 \dots m}^j, \ty_{1 \dots m}^j) = \frac{1}{m} \sum_{i=1}^m (f(w^{(j)}, x_i^{(j)})-\ty_i^{(j)})^2,
\end{align*} then
\begin{align}
    Var[g] = Var_{w}\left[\cE_{x,\ty} [g(w,x,\ty) \mid w] \right] + \cE_{h} \left[ Var_{x,\ty} [g(w,x,\ty) \mid w ]\right]
    \label{eq:total_var}
\end{align}

To bound the first term of equation~\eqref{eq:total_var}, we use Poincaré-type inequalities in Assumption~\ref{ass:X}. For each of the example distributions, we show that they indeed satisfy Assumption~\ref{ass:X}.

\begin{lemma}
    \label{lem:satisfies_poincare}
    Each of the example distributions in Assumption~\ref{ass:X} satisfies a Poincare-type inequality.
\end{lemma}

\begin{proof}

\begin{itemize}
    \item When $P_W$ is the uniform distribution over $\ww \in \RR^e$, which is open, connected, and bounded with Lipschitz boundary, we use Poincaré–Wirtinger inequality~\cite{poincare1890equations} on the smooth function $\cE [g\mid w]$: For some constant $C$ that only depends on $P_W$,
\begin{align}
    Var_w[\cE [g\mid w]] \le C \cE \left[\|\nabla_w \cE[g\mid w] \|_2^2\right].
    \label{eq:poincare}
\end{align}

$C$ is the Poincaré constant for the domain $\ww$ in $L_2$ norm. It can be bounded by $1/\lambda_1$ where $\lambda_1$ is the first eigenvalue of the negative Laplacian of the manifold $\ww$~\cite{yau1975isoperimetric}. Many previous works study the optimal Poincaré constants for various domains~\cite{kuznetsov2015sharp}. For example, when $w$ is uniform over $\ww$ which is a bounded, convex, Lipschitz domain with diameter $d$, $C \le d/\pi$~\cite{payne1960optimal}. 
\end{itemize}
We can apply probabilistic Poincaré inequalities over non-Lebesgue measure $P_W$:
\begin{itemize}
    \item When $w \sim \nn(\mu, \Sigma)$, we use the Gaussian Poincaré inequality (see \emph{e.g.} Theorem 3.20 of~\cite{boucheron2013concentration} and using change of variables), \begin{align*}
Var[F(w)] \le \cE [\langle \Sigma \nabla_w F(w), \nabla_w F(w)\rangle ].
\end{align*} We apply this with $F(w) = \cE[g \mid w]$. Since $\cE [v^\top A v] = \cE [Tr(v^\top A v)] = \cE [Tr(A v v^\top)]=Tr(A \cE[v v^\top]) \le \|A\|_2 \cE \left[\|v\|_2^2\right]$, \begin{align*}
    Var_w[\cE [g\mid w]] \le \|\Sigma\|_2 \cE \left[ \|\nabla_w \cE[g\mid w] \|_2^2 \right],
\end{align*} which satisfies equation~\eqref{eq:poincare} with  $C=\|\Sigma\|_2$.

    \item When $w \in \RR^e$ has independent coordinates $w_1, \dots, w_e$ and each coordinate has the symmetric exponential distribution $1/2 e^{-|t|}$ for $t\in \RR$, we first bound a single dimension using Lemma 4.1 of~\cite{ledoux1999concentration}, which says for any function $k \in L^1$,\begin{align*}
        Var(k(w_i)) \le 4 \cE \left[k'(w_i)^2\right]
    \end{align*} which, combined with the Efro-Stein inequality (Theorem 3.1 of~\cite{boucheron2013concentration}),
    \begin{align*}
        Var(F(w)) = \cE \sum_{i=1}^e {Var(F(w)\mid w_1, \dots, w_{i-1}, w_{i+1}, \dots, w_n)},
    \end{align*} yields: \begin{align*}
        Var(F(w)) \le 4 \cE \left[\|F'(w)\|_2^2\right]
    \end{align*} which satisfies equation~\eqref{eq:poincare} with $C=4$.
\end{itemize} 

Lastly, we consider the case where $P_W$ is a mixture over base distributions satisfying Poincaré inequalities. We first consider the case where the pair-wise chi-squared distances are bounded. Next, we show that mixture of isotropic Gaussians satisfies Poincaré inequality without further condition on pair-wise chi-squared distances.
\begin{itemize}
    \item When $\{P_W^q\}_{q \in \qq}$ is a family of distributions, each satisfying Poincaré inequality with constant $C^q$, and $P_W$ is any mixture over $\{P_W^q\}_{q \in \qq}$ with density $\mu$, let $K_P(\mu) = ess_\mu \sup_q C^q$, which is an upper bound on the base Poincaré constants almost surely, and $K_{\chi^2}^p(\mu) = \cE_{q, q' \sim \mu} [(1+\chi^2 (P_W^q || P_W^{q'}))^p]^{1/p}$, which is an upper bound on the pairwise $\chi^2$-divergence. Using Theorem 1 of~\cite{chen2021dimension} we get that $P_W$ satisfies Poincaré inequality with constant $C$ such that $C \le K_P(\mu) (p^* + K_{\chi^2}^p(\mu)) $ where $p^*$ is the dual exponent of $p$ satisfying $1/p + 1/{p^*} = 1$.

    As an example, when base distributions are from the same exponential family and the natural parameter space is affine, such as mixture of Poisson or Multinomial distributions, the pair-wise chi-squared distances are bounded (under some additional conditions) and hence the mixture satisfies Poincaré inequality. More formally, let $p_\theta(x) = \exp{\left(T(x)^\top \theta -F(\theta)+k(x)\right)}$ where $\theta \in \Theta$ is the natural parameter space and $A(\theta)$ is the log partition function. Lemma 1 in~\cite{nielsen2013chi} shows that
    \begin{align*}
        \chi^2(p_{\theta_1} || p_{\theta_2}) = e^{\left(A(2\theta_2-\theta_1) -(2A(\theta_2)-A(\theta_1))\right)} - 1,
    \end{align*} which is bounded as long as $2 \theta_2 - \theta_1 \in \Theta$. This is satisfied for mixture of 1-D Poisson distributions which can be written as $p(w|\lambda)=\frac{1}{w!}\exp{\left(w \log \lambda-\lambda\right)}$ with natural parameter space $\RR$, and mixture of $e$-dimensional Multinomial distributions $p(w|\pi)=\exp{\left(\langle w , \log{\left(\pi / \left(1-\sum_{i=1}^{e-1} \pi_i\right)\right)}\rangle + \log{\left(1-\sum_{i=1}^{e-1} \pi_i\right)}\right)}$ with natural parameter space $R^{e-1}$. When applied to Gaussian family the natural parameters are
    \begin{align*}
        \theta_q = \begin{pmatrix}
\Sigma_q^{-1} \mu_q \\
vec \left(-\frac{1}{2}\Sigma_q^{-1} \right)
\end{pmatrix}.
    \end{align*} Since the covariance has to be positive definite matrices, $2 \theta_q- \theta_{q'}$ may not be a set of valid natural parameter. We deal with this in the next case.

    \item When $\{P_W^q\}_{q \in \qq}$ is a mixture of isotropic Gaussians, each with mean $\mu_q \in \RR^e$ and covariance $\Sigma_q = \sigma_q^2 I_e$, each satisfying Poincaré inequality with constant $C^q$ (in the single-Gaussian case above we know that $C^q \le \sigma_q^2$), $P_W$ also satisifes Poincaré inequality. We prove this via induction. The key lemma is below:
    \begin{lemma}[Corollary 1 of~\cite{schlichting2019poincare}]
        Suppose measure $p_0$ is absolutely continuous with respect to measure $p_1$, and $p_0$, $p_1$ satisfy Poincaré inequality with constants $C_0$, $C_1$ respectively, then for all $\alpha \in [0,1]$ and $\beta=1-\alpha$, mixture measure $p = \alpha p_0 + \beta p_1$ satisfies Poincaré inequality with with $C \le \max\left\{C_0, C_1 (1+\alpha \chi_1)\right\}$ where $\chi_1 = \int \frac{d p_0}{d p_1} d p_0 -1$.
    \end{lemma} We sort the components in the order of non-decreasing $\sigma_q^2$, and add in each component one by one. For each new component $i = 2, \dots, |\qq|$, we apply the above lemma with $p_0$ being mixture of $P_W^1, \dots, P_W^{i-1}$ and $p_1$ being the new component $P_W^i$. We only need to prove that $\chi_1$ is bounded at every step. Suppose $p_0 = \sum_{j=1}^{i-1} \alpha_j P_W^j$ with $\sum_{j=1}^{i-1} \alpha_j=1$, $p_1 = P_W^i$, and $P_W^j = \frac{1}{(2\pi)^{e/2} \sigma_j^e} \exp\left\{-\frac{1}{2}(w-\mu_j)^\top \Sigma_j^{-1} (w-\mu_j)\right\}$. Therefore
    \begin{align*}
        \chi_1 +1 &= \int \frac{d p_0}{d p_1} d p_0 = \int_w \frac{p_0(w)^2}{p_1(w)} dw \\
        &= \int_w \frac{\sum_{j=1}^{i-1} \frac{\alpha_j^2}{\sigma_j^{2e}} \exp\left\{-\frac{\|w-\mu_j\|^2}{\sigma_j^2} \right\} + \sum_{j=1}^{i-1} \sum_{j' \ne j} \frac{2\alpha_j \alpha_{j'}}{\sigma_j^e \sigma_{j'}^e} \exp\left\{-\frac{\|w-\mu_j\|^2}{2\sigma_j^2}-\frac{\|w-\mu_{j'}\|^2}{2\sigma_{j'}^2}\right\}}{\frac{(2\pi)^{e/2}}{\sigma_i^e} \exp\left\{-\frac{\|w-\mu_i\|^2}{2\sigma_i^2}\right\}} dw
    \end{align*}
    The convergence condition of the above integral is $2\sigma_i^2 \ge 2\sigma_j^2$ for all $j<i$ which is satisfied when $\sigma_i^2 \ge \sigma_j^2$.
    
\end{itemize}
\end{proof}

Next we observe that
\begin{align*}
    \nabla_w \cE[g\mid w] = \nabla_w \int_{x,\ty} {(f(w,x)-\ty)^2 p(x,\ty) dx d\ty} =  2 \int_{x,y} {(f(w,x)-\ty) \frac{\partial f}{\partial w} p(x,\ty) dx d\ty} = 2 \cE \left[ (f(w,x)-\ty) \frac{\partial f}{\partial w} \right].
\end{align*}

Since $|f(w,x) - \ty| \le 1+\epsilon$ almost surely, $\cE \left[\left\|\frac{\partial f}{\partial w}\right\|_2^2 \right] \le \beta^2$,
\begin{align*}
    \cE_h \left[\|\nabla_w \cE[g\mid w]\|_2^2\right] = 4 \cE \left[ \left\|(f(w,x)-y) \frac{\partial f}{\partial w}\right\|_2^2 \right] \le 4(1+\epsilon)^2 \beta^2.
\end{align*} Therefore
\begin{align*}
    Var_w[\cE [g\mid w]] \le C \cE \left[ \|\nabla_w \cE[g\mid w] \|_2^2 \right] \le 4 (1+\epsilon)^2 C \beta^2.
\end{align*} 

To bound the second term of equation~\eqref{eq:total_var}, we use concentration of mean of $m$ \emph{i.i.d.} random variables.

Conditioned on $w^{(j)}$, each of the loss $(f(w^{(j)}, x_i^{(j)})-\ty_i^{(j)})^2$ are \emph{i.i.d.} and bounded in $[0,(1+\epsilon)^2]$. Hence each variable has variance upper bound $((1+\epsilon)^2-0)^2/4=(1+\epsilon)^4/4$ and the mean has variance upper bound $(1+\epsilon)^4/4m$.

Therefore $Var[g] \le 4 (1+\epsilon)^2 C \beta^2 + (1+\epsilon)^4/4m$.
\end{proof}
\begin{proof}[Proof of Theorem~\ref{thm:formal}]
\begin{align*}
    L(\hat{f})-L(f^*) &\le 2 \sup_{f \in \ff} | \tL(f)- \hat{L}(f) |
\end{align*}
\begin{align}
    \le 2 \cE \sup_f | \tL(f) - \hat{L}(f) | + 4\sqrt{\frac{\cE \sup_f \left| \hat{L}(f) -  \tL(f)\right| \log(1/\delta)}{n}}+\sqrt{\frac{(32 (1+\epsilon)^2 C \beta^2 + \frac{2(1+\epsilon)^4}{m}) \log{(1/\delta)}}{n}}+\frac{2\log{(1/\delta)}}{3n}
    \label{eq:final}
\end{align} by equation~\eqref{eq:bousquet} and Lemma~\ref{lem:var}.

We now show that $\cE \sup_f |\tL(f)-\hat{L}(f) | \le 2(1+\epsilon) R_{nm}(F)$. This is similar to the argument for classical Rademacher complexity
\begin{align*}
    &\cE_{w,x,\ty} \sup_f \left( \frac{1}{nm} \sum_{i,j} (f(w^{(j)}, x_i^{(j)})-\ty_i^{(j)})^2 - \cE_{w,x,\ty} (f(w^{(j)}, x_i^{(j)})-\ty_i^{(j)})^2 \right) \\ &\le \frac{1}{nm}  \cE_{S, S'} \sup_f \left( \sum_{i,j} [(f(w^{(j)}, x_i^{(j)})-\ty_i^{(j)})^2 - (f(w^{\prime {(j)}}, x_i^{\prime {(j)}})-\ty_i^{\prime {(j)}})^2 ]\right) \\
    &= \frac{1}{nm}  \cE_{S, S', \sigma} \sup_f \left( \sum_{i,j} [\sigma_i^j (f(w^{(j)}, x_i^{(j)})-\ty_i^{(j)})^2 - \sigma_i^j(f(w^{\prime {(j)}}, x_i^{\prime {(j)}})-\ty_i^{\prime {(j)}})^2 ]\right) \\
    &\le \frac{1}{nm}  \cE_{S, \sigma} \sup_f \left( \sum_{i,j} \sigma_i^j (f(w^{(j)}, x_i^{(j)})-\ty_i^{(j)})^2 \right)+ \frac{1}{nm}  \cE_{S', \sigma} \sup_f \left( \sum_{i,j} \sigma_i^j (f(w^{\prime {(j)}}, x_i^{\prime {(j)}})-\ty_i^{\prime {(j)}})^2 \right) \\
    &= 2 R_{nm}(\mathcal{\tL}).
\end{align*} where the first inequality uses Jensen's inequality and convexity of $sup$.

Now we prove the equivalent of Talagrand's contraction lemma to show that $R_{nm}(\mathcal{\tL}) \le 2 R_{nm}(\ff)$. Note that the squared loss is $2(1+\epsilon)$-Lipschitz since $\left| \frac{\partial (f-\ty)^2}{\partial f} \right| = 2 |f-\ty| \le 2(1+\epsilon)$. We use the following lemma to prove this:

\begin{lemma}[Lemma 5 of~\cite{meir2003generalization}]
    \label{lem:zhangmeir}
   Suppose $\{\phi_i\}$, $\{\psi_i\}$, $i=1, \dots, N$ are two sets of functions on $\Theta$ such that for each $i$ an $\theta, \theta' \in \Theta$, $| \phi_i(\theta)-\phi_i(\theta') | \le |\psi_i(\theta)-\psi_i(\theta') |$. Then for all functions c: $\Theta \rightarrow \RR$,
   \begin{align*}
       \cE_\sigma \left[\sup_\theta \left\{ c(\theta)+\sum_{i=1}^N \sigma_i  \phi_i(\theta) \right\} \right] \le \cE_\sigma \left[\sup_\theta \left\{ c(\theta)+\sum_{i=1}^N \sigma_i  \psi_i(\theta) \right\} \right]
   \end{align*}
\end{lemma}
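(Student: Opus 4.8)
The plan is to prove this comparison inequality by a coordinate-by-coordinate peeling argument (induction on $N$) that reduces the claim to a single-Rademacher-variable statement, and then to establish that one-variable inequality directly by analyzing the maximizers of the two suprema.

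First I would isolate the one-dimensional core. For a single Rademacher sign $\sigma$, we have $\cE_\sigma[\sup_\theta\{c(\theta)+\sigma\phi(\theta)\}] = \tfrac12\sup_\theta\{c(\theta)+\phi(\theta)\} + \tfrac12\sup_\theta\{c(\theta)-\phi(\theta)\}$, so the single-coordinate claim becomes
\[
\sup_\theta\{c(\theta)+\phi(\theta)\} + \sup_\theta\{c(\theta)-\phi(\theta)\} \le \sup_\theta\{c(\theta)+\psi(\theta)\} + \sup_\theta\{c(\theta)-\psi(\theta)\}.
\]
To prove this I would pick $\varepsilon$-optimal maximizers $\theta_1,\theta_2$ of the two suprema on the left, so that the left-hand side equals, up to slack $2\varepsilon$, the quantity $c(\theta_1)+c(\theta_2)+\bigl(\phi(\theta_1)-\phi(\theta_2)\bigr)$. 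The hypothesis gives $\phi(\theta_1)-\phi(\theta_2)\le\lvert\psi(\theta_1)-\psi(\theta_2)\rvert$, and a two-case split on the sign of $\psi(\theta_1)-\psi(\theta_2)$ lets me regroup: when $\psi(\theta_1)\ge\psi(\theta_2)$ the bound becomes $[c(\theta_1)+\psi(\theta_1)]+[c(\theta_2)-\psi(\theta_2)]$, and otherwise $[c(\theta_1)-\psi(\theta_1)]+[c(\theta_2)+\psi(\theta_2)]$. In either case each bracket is dominated by one of the two suprema on the right, so the left-hand side is at most the right-hand side; sending $\varepsilon\to0$ closes the single-coordinate inequality.

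Next I would lift this to general $N$ by induction. Define the hybrid quantities $G_k = \cE_\sigma\bigl[\sup_\theta\{c(\theta) + \sum_{i\le k}\sigma_i\psi_i(\theta) + \sum_{i>k}\sigma_i\phi_i(\theta)\}\bigr]$, so $G_0$ and $G_N$ are exactly the two sides of the lemma. To show $G_{k-1}\le G_k$, I condition on all signs $\sigma_i$ with $i\ne k$, absorb the now-fixed terms $c(\theta)+\sum_{i<k}\sigma_i\psi_i(\theta)+\sum_{i>k}\sigma_i\phi_i(\theta)$ into a single function $\tilde c(\theta)$, and apply the one-variable inequality with $(\phi_k,\psi_k)$ in place of $(\phi,\psi)$; taking the outer expectation over the remaining signs preserves the inequality since it holds pointwise. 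Chaining $G_0\le G_1\le\cdots\le G_N$ yields the claim.

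The main obstacle is the single-coordinate step: the comparison is not term-by-term but relies crucially on the freedom to reassign the two signs $\pm\psi$ to the two distinct maximizers $\theta_1,\theta_2$, so getting the case analysis and the regrouping right is where the real content lies. A secondary technical point is that the suprema need not be attained; I would handle this cleanly throughout by working with $\varepsilon$-optimal points and letting $\varepsilon\to0$, which leaves the inequality intact.
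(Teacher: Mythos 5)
Your proof is correct. The paper does not actually prove this lemma---it imports it verbatim as Lemma~5 of the cited reference (Meir and Zhang, 2003)---but your argument is essentially the standard proof given there and in the Ledoux--Talagrand contraction literature: reduce to a single Rademacher variable by conditioning and peeling one coordinate at a time, rewrite the one-variable expectation as the average of the two suprema, and close the inequality via $\varepsilon$-optimal points together with the two-case regrouping on the sign of $\psi(\theta_1)-\psi(\theta_2)$. Both the single-coordinate case analysis and the hybrid-chain induction $G_0\le G_1\le\cdots\le G_N$ are carried out correctly, and the $\varepsilon$-optimality device properly handles non-attained suprema, so there is no gap.
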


For any set of $w, x$, we apply Lemma~\ref{lem:zhangmeir} with $\Theta=\ff$, $\theta=f$, $N = nm$, $\phi_{ij}(f) = (f(w^{(j)}, x_i^{(j)})-\ty_i^{(j)})^2$, $\psi_{ij}(f) = 2 (1+\epsilon)f(w^{(j)}, x_i^{(j)})$, and $c(\theta)=0$. Since $\left|  (f-\ty)^2 - (f'-\ty)^2 \right| \le 2(1+\epsilon)|f-f'|$, so the condition for Lemma~\ref{lem:zhangmeir} hold. We take expectation over $w, x$ and divide both sides by $nm$ to get
\begin{align*}
    \frac{1}{nm} \cE_{w,x,\sigma} \sup_f  \sum_{j=1}^n \sum_{i=1}^m \sigma_i^j (f(w^{(j)}, x_i^{(j)})-\ty_i^{(j)})^2 \le \frac{2(1+\epsilon)}{nm} \cE_{w,x,\sigma} \sup_f  \sum_{j=1}^n \sum_{i=1}^m \sigma_i^j f(w^{(j)}, x_i^{(j)})
\end{align*} which means $R_{nm}(\law) \le 2(1+\epsilon) R_{nm}(\ff)$. Substituting this into inequality~\eqref{eq:final} finishes the proof.

\end{proof}

\subsection{Zero-shot Rademacher complexity bound for the linear hypothesis class}
\label{sec:appendix-rademacher}
Consider the linear classifier $F = \{(w_1^\top w+w_2^\top x: \|w_1\|_2 \le B, \|w_1\|_2 \le C\}$. Suppose $\|w\|_2 \le 1$ and $\|x\|_2 \le 1$.
\begin{align*}
    R_{nm}(F) &= \frac{1}{nm} \cE_{\sigma,w, x} \sup_{w} \left\{ \langle w_1, \sum_{ij} \sigma_i^j w^{(j)} \rangle + \langle w_2, \sum_{ij} \sigma_i^j x_i^{(j)} \rangle \right\} \\
    &= \frac{1}{nm} \left(B_1 \cE_{\sigma, w} \|\sum_{ij} \sigma_i^j w^{(j)}\|_2 + B_2 \cE_{\sigma, x}\|\sum_{ij} \sigma_i^j x_i^{(j)}\|_2 \right) \\
    &\le \frac{1}{nm} \left(B_1 \sqrt{m \sum_j \|w^{(j)}\|_2^2} + B_2 \sqrt{ \sum_{ij} \|x_i^{(j)}\|_2^2} \right) \\
    &= (B_1+B_2) /\sqrt{nm}.
\end{align*}
We observe that the bound is the same as the standard Rademacher complexity for $nm$ independent samples, which is interesting. The relationship between standard and zero-shot Rademacher complexity for other function classes is an important future direction.

\section{Extended Related Work} \label{sec:extended_related_work}

Our approach to zero-shot prediction of intervention effects is related to recent advances in heterogenous treatment effect (HTE) estimation, zero-shot learning, and meta-learning.

\subsection{Heterogenous treatment effect (HTE) estimation}

\textbf{Conditional average treatment effect (CATE) estimation.} A number of approaches have been developed to predict the effect of an existing intervention on an individual or subgroup, based on historical data from individuals who received it. This problem is often referred to in the literature as heterogeneous treatment effect (HTE) estimation~\cite{hastie2009elements,crump2008nonparametric}, to denote that the goal is to detect heterogeneities in how individuals respond to an intervention. A more specific instance of HTE estimation, which we focus on here, is conditional average treatment effect (CATE) estimation~\cite{wager2018estimation,kunzel2019metalearners}, in which the goal is to predict the effect of a treatment \emph{conditioned} on an individual's features. A variety of methods and specific models have been developed to achieve this goal~\cite{hastie2009elements,johansson2016learning,green2012modeling,hill2011bayesian,wager2018estimation,shalit2017estimating,alaa2017bayesian,yoon2018ganite,hassanpour2019counterfactual,zhang2020learning,hassanpour2019learning,curth2021nonparametric,curth2021really,kunzel2019metalearners,kennedy2020optimal,crump2008nonparametric, athey2016recursive}, and we refer to Bica et al. and Curth et al. for a detailed review of these methods~\cite{bica2021real,curth2021nonparametric}. These methods estimate CATE for an existing intervention, based on historical data from individuals who received it and those that did not.

While these approaches have a number of useful applications, they do not address CATE for novel interventions which did not exist during training  (zero-shot). Our primary contribution is a meta-learning framework to leverage these existing CATE estimators for zero-shot predictions. In the \method~framework (Figure~\ref{fig:overview2}), each task corresponds to predicting CATE for a single intervention. We synthesize a task by sampling a natural experiment for each intervention, and then use any existing CATE estimator to generate a noisy target label for our the task (Step 2: estimate pseudo-outcomes). We rely on pseoudo-outcome estimates as training labels because prior work has shown that training on observed outcomes directly leads to biased CATE estimates ~\cite{chernozhukov2018double,kunzel2019metalearners,kennedy2020optimal}, a result which we find holds true in our experiments as well (see T-learner and S-learner w/ meta-learning in Tables \ref{tab:mainResults} and \ref{tab:mainResults2}). 

\textbf{Pseudo-outcome estimators.} Prior work has developed a variety of methods to estimate CATE pseudo-outcomes, which are noisy but unbiased estimates of CATE, such as the X-learner~\cite{kunzel2019metalearners}, R-learner~\cite{nie2021quasi}, DR-learner~\cite{kennedy2020optimal}, and  RA-learner~\cite{curth2021nonparametric}. Moreover, the outputs of any other CATE estimation method, such as methods which directly estimate CATE via an end-to-end neural network~\cite{johansson2016learning,shalit2017estimating,shi2019adapting} are an equally valid choice of pseudo-outcome. The literature on pseudo-outcome estimation is growing continuously as new estimators are being developed~\cite{frauen2022estimating,konstantinov2022heterogeneous}. Typically, these estimators are specific to a \emph{single binary intervention}, for which a set of nuisance models are trained and used to compute the pseoudo-outcomes. As such, applying meta-learning algorithms to these pseudo-outcomes requires synthesizing a natural experiment for each intervention, which corresponds to a single task in the \method~ framework. 

\textbf{Multi-cause estimators.} Our methods to address zero-shot CATE estimation for combinations of interventions are distinct from multi-cause estimators for combinations of binary or categorical interventions~\cite{wang2019blessings,qian2021estimating,saini2019multiple}. Recent work has shown that these methods can predict the effects of new combinations of interventions~\cite{ma2021multi}, when every intervention in the combination has been observed at some point during. However, these methods do not estimate CATE for novel interventions which did not exist during training. By contrast, \method~ estimates CATE for zero-shot intervention combinations in which none of the interventions in the combo was ever observed during training (Appendix Table \ref{sec:appendix-experiments}).  

\subsection{Zero-shot learning} Zero-shot learning (ZSL) has traditionally aimed to reason over new concepts and classes~\cite{xian2017zero,romera2015embarrassingly} which did not exist during training time. While ZSL has primarily focused on natural language processing and computer vision~\cite{wang2019survey}, recent interest has been sparked in generalizing over novel interventions (zero-shot) in the biomedical domain ~\cite{roohani2022gears,hetzel2022predicting} in which data can be cheaply collected for hundreds or thousands of possible interventions~\cite{zitnik2018modeling, tatonetti2012data, duan2014lincs}. However, general-purpose zero-shot causal methods have been largely unexplored. Notable exceptions include GranITE~\cite{harada2021graphite} and SIN~\cite{harada2021graphite}, which each extend a specific CATE estimation~\cite{nie2021quasi,kunzel2019metalearners} method to incorporate intervention information ($W$). However, these approaches  have significant drawbacks, which we discuss in Section \ref{sec:related_work}. 

\subsection{Meta-learning} Meta-learning, or \emph{learning to learn}, aims to train models which can quickly adapt to new settings and tasks. The key idea is to enable a model to gain experience over multiple learning episodes - in which episodes typically correspond to distinct tasks - to accelerate learning in subsequent learning episodes~\cite{hospedales2021meta}. The meta-learning literature is rich and spans multiple decades~\cite{thrun2012learning,schmidhuber1987evolutionary,salimans2016weight,bengio1990learning}, with recent interest focused on model-agnostic methods to train deep learning models to quickly adapt to new tasks~\cite{finn2017model,raghu2019rapid,nichol2018reptile}. A common focus in the meta-learning literature is few-shot learning, in which a model must adapt to a new task given a small support set of labeled examples. By contrast, we focus on the zero-shot setting, in which no such support set exists. However, we hypothesize that the typical meta-learning problem formulation and training algorithms may also improve zero-shot performance. Thus, \method's problem formulation and algorithm inspiration from the meta-learning literature, particularly the Reptile algorithm~\cite{nichol2018reptile} and its application to other tasks in causal inference~\cite{sharma2019metaci}. Our experimental results show that this meta-learning formulation improves \method's performance, compared to a standard multi-task learning strategy.

\begin{sidewaystable*}[t]   
\centering
\footnotesize
\begin{adjustbox}{max width=22.9cm}
{\renewcommand{\arraystretch}{1.15}
\hspace{-0.2cm}
\begin{tabular}{@{}l@{\;}|@{\;\;}c@{\;}@{\;}c@{\;}@{\;}c@{\;\;}@{\;}c@{\;\;}|@{\;}c@{\;}@{\;}c@{\;}@{\;}c@{\;}@{\;}c|@{\;}c@{\;}@{\;}c@{\;}@{\;}c@{\;}@{\;\;}c@{}}
 & \multicolumn{4}{@{\;}c@{\;}|@{\;}}{RATE @$u$ \ ($\uparrow$)} & \multicolumn{4}{@{\;}c@{\;}|@{\;}}{ Recall @$u$ \ ($\uparrow$)} & \multicolumn{4}{c}{Precision @$u$ \ ($\uparrow$)}\\
{} & $0.999$& $.998$&$0.995$ & $0.99$ & $0.999$& $0.998$&$0.995$ & $0.99$&$0.999$& $0.998$&$0.995$ & $0.99$ \\
\hline
Random &0.00$\pm$$<$0.001&0.00$\pm$$<$0.001&0.00$\pm$$<$0.001&0.00$\pm$$<$0.001&0.00$\pm$$<$0.001&0.00$\pm$$<$0.001&0.01$\pm$$<$0.001&0.00$\pm$$<$0.001&0.00$\pm$$<$0.001&0.00$\pm$$<$0.001&0.00$\pm$$<$0.001&0.00$\pm$$<$0.001\\
\hline
\rowcolor{Col1}[.22\tabcolsep][2.7mm]
T-learner&0.32$\pm$$<$0.001&0.26$\pm$$<$0.001&0.16$\pm$$<$0.001&0.10$\pm$$<$0.001&0.12$\pm$$<$0.001&0.18$\pm$$<$0.001&0.26$\pm$$<$0.001&0.31$\pm$$<$0.001&0.36$\pm$$<$0.001&0.29$\pm$$<$0.001&0.18$\pm$$<$0.001&0.11$\pm$$<$0.001\\
\rowcolor{Col1}[.22\tabcolsep][2.7mm]
X-learner&0.06$\pm$$<$0.001&0.05$\pm$$<$0.001&0.04$\pm$$<$0.001&0.03$\pm$$<$0.001&0.02$\pm$$<$0.001&0.04$\pm$$<$0.001&0.08$\pm$$<$0.001&0.12$\pm$$<$0.001&0.09$\pm$$<$0.001&0.07$\pm$$<$0.001&0.06$\pm$$<$0.001&0.05$\pm$$<$0.001\\
\rowcolor{Col1}[.22\tabcolsep][2.7mm]
R-learner&0.19$\pm$$<$0.001&0.17$\pm$$<$0.001&0.12$\pm$$<$0.001&0.08$\pm$$<$0.001&0.06$\pm$$<$0.001&0.10$\pm$$<$0.001&0.19$\pm$$<$0.001&0.26$\pm$$<$0.001&0.24$\pm$$<$0.001&0.21$\pm$$<$0.001&0.15$\pm$$<$0.001&0.11$\pm$$<$0.001\\
\rowcolor{Col1}[.22\tabcolsep][2.7mm]
RA-learner&\bestsingle{0.47}$\pm$0.001&\bestsingle{0.37}$\pm$$<$0.001&\bestsingle{0.23}$\pm$$<$0.001&\bestsingle{0.14}$\pm$$<$0.001&\bestsingle{0.17}$\pm$$<$0.001&\bestsingle{0.26}$\pm$$<$0.001&\bestsingle{0.38}$\pm$$<$0.001&\bestsingle{0.45}$\pm$$<$0.001&\bestsingle{0.54}$\pm$0.001&\bestsingle{0.42}$\pm$$<$0.001&\bestsingle{0.26}$\pm$$<$0.001&\bestsingle{0.16}$\pm$$<$0.001\\
\rowcolor{Col1}[.22\tabcolsep][2.7mm]
DragonNet&0.09$\pm$0.037&0.07$\pm$0.030&0.05$\pm$0.019&0.04$\pm$0.013&0.02$\pm$0.008&0.04$\pm$0.012&0.07$\pm$0.020&0.10$\pm$0.027&0.12$\pm$0.045&0.10$\pm$0.036&0.07$\pm$0.023&0.05$\pm$0.015\\
\rowcolor{Col1}[.22\tabcolsep][2.7mm]
TARNet&0.15$\pm$0.011&0.12$\pm$0.011&0.07$\pm$0.006&0.05$\pm$0.004&0.05$\pm$0.003&0.08$\pm$0.006&0.12$\pm$0.008&0.14$\pm$0.011&0.18$\pm$0.013&0.15$\pm$0.012&0.09$\pm$0.007&0.06$\pm$0.004\\
\rowcolor{Col1}[.22\tabcolsep][2.7mm]
FlexTENet&0.10$\pm$0.015&0.09$\pm$0.016&0.06$\pm$0.008&0.04$\pm$0.006&0.04$\pm$0.006&0.07$\pm$0.009&0.12$\pm$0.011&0.17$\pm$0.017&0.12$\pm$0.018&0.11$\pm$0.019&0.08$\pm$0.010&0.06$\pm$0.007\\
\rowcolor{Col2}[.22\tabcolsep][2.7mm]
GraphITE &0.19$\pm$0.024&0.12$\pm$0.013&0.05$\pm$0.004&0.03$\pm$0.002&0.07$\pm$0.009&0.08$\pm$0.010&0.09$\pm$0.008&0.10$\pm$0.008&0.23$\pm$0.027&0.14$\pm$0.015&0.07$\pm$0.005&0.04$\pm$0.003\\
\rowcolor{Col2}[.22\tabcolsep][2.7mm]
SIN & 0.00$\pm$0.002&0.00$\pm$0.001&0.00$\pm$0.001&0.00$\pm$0.001&0.00$\pm$0.001&0.00$\pm$0.001&0.01$\pm$0.001&0.02$\pm$0.002&0.01$\pm$0.002&0.01$\pm$0.001&0.01$\pm$0.001&0.01$\pm$0.001\\
\rowcolor{Col2}[.22\tabcolsep][2.7mm]
S-learner w/ meta-learning &0.21$\pm$0.032&0.16$\pm$0.028&0.09$\pm$0.020&0.05$\pm$0.012&0.08$\pm$0.013&0.11$\pm$0.022&0.15$\pm$0.035&0.16$\pm$0.038&0.25$\pm$0.034&0.18$\pm$0.031&0.10$\pm$0.023&0.06$\pm$0.014\\
\rowcolor{Col2}[.22\tabcolsep][2.7mm]
T-learner w/ meta-learning &0.40$\pm$0.012&0.31$\pm$0.010&0.18$\pm$0.007&0.11$\pm$0.004&0.15$\pm$0.006&0.22$\pm$0.008&0.32$\pm$0.013&0.38$\pm$0.014&0.45$\pm$0.013&0.35$\pm$0.011&0.21$\pm$0.008&0.13$\pm$0.004\\
\hline
\rowcolor{Col2}[.22\tabcolsep][2.7mm]
\method~- w/o meta-learning &0.39$\pm$0.012&0.31$\pm$0.006&0.18$\pm$0.008&0.11$\pm$0.006&0.15$\pm$0.005&0.22$\pm$0.007&0.32$\pm$0.014&0.39$\pm$0.021&0.45$\pm$0.010&0.35$\pm$0.006&0.22$\pm$0.008&0.14$\pm$0.006\\
\rowcolor{Col2}[.22\tabcolsep][2.7mm]
\method~- w/o RA-learner&0.45$\pm$0.058&0.36$\pm$0.066&0.22$\pm$0.067&\bestzero{0.14}$\pm$0.041&0.16$\pm$0.020&0.24$\pm$0.019&0.35$\pm$0.016&0.41$\pm$0.023&0.51$\pm$0.076&0.41$\pm$0.082&0.26$\pm$0.078&\bestzero{0.16}$\pm$0.048\\
\rowcolor{Col2}[.22\tabcolsep][2.7mm]
\method~
(ours)&\bestzero{0.48}$\pm$0.010&\bestzero{0.38}$\pm$0.007&\bestzero{0.23}$\pm$0.003&0.13$\pm$0.002&\bestzero{0.18}$\pm$0.004&\bestzero{0.27}$\pm$0.005&\bestzero{0.38}$\pm$0.006&\bestzero{0.45}$\pm$0.010&\bestzero{0.54}$\pm$0.012&\bestzero{0.43}$\pm$0.008&\bestzero{0.26}$\pm$0.004&0.16$\pm$0.003\\
\end{tabular}
}
\end{adjustbox}
\caption{Performance results for the Claims dataset (predicting pancytopenia onset from drug exposure using patient medical history. This table extends Table \ref{tab:mainResults} with standard deviations.}
\label{tab:mainResultsSTD}
\end{sidewaystable*}

\begin{sidewaystable*}[t] 
\label{tab:pairs-results}
\footnotesize
\centering
\begin{adjustbox}{max width=22.9cm}
{\renewcommand{\arraystretch}{1.15}
\begin{tabular}{@{}l@{\;}|@{\;\;}c@{\;}@{\;}c@{\;}@{\;}c@{\;\;}@{\;}c@{\;\;}|@{\;}c@{\;}@{\;}c@{\;}@{\;}c@{\;}@{\;}c|@{\;}c@{\;}@{\;}c@{\;}@{\;}c@{\;}@{\;\;}c@{}}
 & \multicolumn{4}{@{\;}c@{\;}|@{\;}}{RATE @$u$ \ ($\uparrow$)} & \multicolumn{4}{@{\;}c@{\;}|@{\;}}{ Recall @$u$ \ ($\uparrow$)} & \multicolumn{4}{c}{Precision @$u$ ($\uparrow$)}\\
{} & $0.999$& $.998$&$0.995$ & $0.99$ & $0.999$& $0.998$&$0.995$ & $0.99$&$0.999$& $0.998$&$0.995$ & $0.99$ \\
\hline
Random &0.00$\pm$$<$0.001&0.00$\pm$$<$0.001&0.00$\pm$$<$0.001&0.00$\pm$$<$0.001&0.0$\pm$$<$0.001&0.0$\pm$$<$0.001&0.01$\pm$$<$0.001&0.01$\pm$$<$0.001&0.01$\pm$$<$0.0014&0.01$\pm$$<$0.001&0.01$\pm$$<$0.001&0.00$\pm$$<$0.001\\
\hline
\rowcolor{Col1}[.22\tabcolsep][2.7mm]
T-learner&0.10$\pm$$<$0.001&0.07$\pm$$<$0.001&0.05$\pm$$<$0.001&0.04$\pm$$<$0.001&0.05$\pm$$<$0.001&0.07$\pm$$<$0.001&0.11$\pm$$<$0.001&0.13$\pm$$<$0.001&0.10$\pm$$<$0.001&0.08$\pm$$<$0.001&0.06$\pm$$<$0.001&0.04$\pm$$<$0.001\\
\rowcolor{Col1}[.22\tabcolsep][2.7mm]
X-learner&0.00$\pm$$<$0.001&-0.01$\pm$$<$0.001&0.00$\pm$$<$0.001&0.00$\pm$$<$0.001&0.00$\pm$$<$0.001&0.00$\pm$$<$0.001&0.01$\pm$$<$0.001&0.02$\pm$$<$0.001&0.00$\pm$$<$0.001&0.00$\pm$$<$0.001&0.00$\pm$$<$0.001&0.01$\pm$$<$0.001\\
\rowcolor{Col1}[.22\tabcolsep][2.7mm]
R-learner&-0.01$\pm$$<$0.001&-0.01$\pm$$<$0.001&-0.01$\pm$$<$0.001&0.00$\pm$$<$0.001&0.00$\pm$$<$0.001&0.00$\pm$$<$0.001&0.00$\pm$$<$0.001&0.04$\pm$$<$0.001&0.00$\pm$$<$0.001&0.00$\pm$$<$0.001&0.00$\pm$$<$0.001&0.01$\pm$$<$0.001\\
\rowcolor{Col1}[.22\tabcolsep][2.7mm]
RA-learner&\bestsingle{0.28}$\pm$$<$0.001&\bestsingle{0.26}$\pm$$<$0.001&\bestsingle{0.17}$\pm$$<$0.001&\bestsingle{0.10}$\pm$$<$0.001&\bestsingle{0.10}$\pm$$<$0.001&\bestsingle{0.19}$\pm$$<$0.001&\bestsingle{0.30}$\pm$$<$0.001&\bestsingle{0.37}$\pm$$<$0.001&\bestsingle{0.30}$\pm$$<$0.001&\bestsingle{0.28}$\pm$$<$0.001&\bestsingle{0.18}$\pm$$<$0.001&\bestsingle{0.11}$\pm$$<$0.001\\
\rowcolor{Col1}[.22\tabcolsep][2.7mm]
DragonNet&-0.01$\pm$0.002&0.00$\pm$0.009&0.00$\pm$0.004&0.00$\pm$0.003&0.00$\pm$$<$0.001&0.00$\pm$0.003&0.00$\pm$0.005&0.01$\pm$0.009&0.00$\pm$$<$0.001&0.00$\pm$0.010&0.00$\pm$0.004&0.00$\pm$0.003\\
\rowcolor{Col1}[.22\tabcolsep][2.7mm]
TARNet&0.04$\pm$0.046&0.03$\pm$0.030&0.02$\pm$0.013&0.02$\pm$0.012&0.01$\pm$0.011&0.02$\pm$0.015&0.04$\pm$0.013&0.06$\pm$0.029&0.05$\pm$0.046&0.04$\pm$0.032&0.03$\pm$0.013&0.02$\pm$0.012\\
\rowcolor{Col1}[.22\tabcolsep][2.7mm]
FlexTENet&0.02$\pm$0.024&0.02$\pm$0.019&0.04$\pm$0.012&0.03$\pm$0.013&0.01$\pm$0.009&0.03$\pm$0.018&0.08$\pm$0.012&0.12$\pm$0.037&0.02$\pm$0.027&0.03$\pm$0.020&0.04$\pm$0.012&0.04$\pm$0.014\\
\rowcolor{Col2}[.22\tabcolsep][2.7mm]
\tikz[overlay, remember picture,anchor=base] \node (Center){};S-learner w/ meta-learning &0.27$\pm$0.173&0.16$\pm$0.118&0.08$\pm$0.052&0.04$\pm$0.030&0.09$\pm$0.055&0.10$\pm$0.070&0.13$\pm$0.084&0.15$\pm$0.090&0.29$\pm$0.180&0.18$\pm$0.123&0.09$\pm$0.055&0.05$\pm$0.032\\
\rowcolor{Col2}[.22\tabcolsep][2.7mm]
T-learner w/ meta-learning &0.27$\pm$0.173&0.16$\pm$0.118&0.08$\pm$0.052&0.04$\pm$0.030&0.09$\pm$0.055&0.10$\pm$0.070&0.13$\pm$0.084&0.15$\pm$0.090&0.29$\pm$0.180&0.18$\pm$0.123&0.09$\pm$0.055&0.05$\pm$0.032\\
\rowcolor{Col2}[.22\tabcolsep][2.7mm]
GraphITE &0.25$\pm$0.088&0.15$\pm$0.054&0.06$\pm$0.025&0.03$\pm$0.011&0.08$\pm$0.024&0.10$\pm$0.034&0.11$\pm$0.045&0.13$\pm$0.049&0.27$\pm$0.091&0.16$\pm$0.057&0.07$\pm$0.027&0.04$\pm$0.013\\
\rowcolor{Col2}[.22\tabcolsep][2.7mm]
SIN & 0.00$\pm$0.008&0.00$\pm$0.014&0.00$\pm$0.008&0.00$\pm$0.005&0.00$\pm$0.005&0.00$\pm$0.008&0.02$\pm$0.015&0.03$\pm$0.009&0.00$\pm$0.007&0.01$\pm$0.014&0.01$\pm$0.009&0.01$\pm$0.005\\
\hline
\rowcolor{Col2}[.22\tabcolsep][2.7mm]
\method~- w/o meta-learning&0.45$\pm$0.070&\bestzero{0.38}$\pm$0.057&0.21$\pm$0.017&0.13$\pm$0.008&0.19$\pm$0.019&0.28$\pm$0.026&0.38$\pm$0.025&0.45$\pm$0.019&0.49$\pm$0.070&\bestzero{0.41}$\pm$0.057&0.23$\pm$0.017&0.15$\pm$0.008\\
\rowcolor{Col2}[.22\tabcolsep][2.7mm]
\method~- w/o RA-learner &0.40$\pm$0.101&0.33$\pm$0.034&\bestzero{0.24}$\pm$0.014&0.15$\pm$0.010&0.18$\pm$0.025&0.28$\pm$0.010&0.42$\pm$0.024&0.50$\pm$0.028&0.44$\pm$0.099&0.36$\pm$0.033&\bestzero{0.26}$\pm$0.014&0.17$\pm$0.010\\
\rowcolor{Col2}[.22\tabcolsep][2.7mm]
\method~(ours)&\bestzero{0.47}$\pm$0.084&0.37$\pm$0.044&0.23$\pm$0.022&\bestzero{0.15}$\pm$0.013&\bestzero{0.20}$\pm$0.015&\bestzero{0.30}$\pm$0.016&\bestzero{0.43}$\pm$0.024&\bestzero{0.51}$\pm$0.027&\bestzero{0.51}$\pm$0.079&0.40$\pm$0.044&0.25$\pm$0.023&\bestzero{0.17}$\pm$0.013\\
\end{tabular}
}
\end{adjustbox}
\vspace{0.05cm}
\caption{Performance results for the medical claims dataset, in which the task is to predict the effect of a \emph{pair} of drugs the drug on pancytopenia occurrence. Mean and standard deviation between runs is reported. Single-task methods were trained on the meta-testing tasks~(best model underlined). 
Methods that were capable of training across multiple tasks were trained on meta-training tasks and applied to previously unseen meta-testing tasks~(best model in bold).
\method~outperforms the strongest baseline that had access to testing tasks on 12 out of 12 metrics, and outperforms all zero-shot baselines. Notably, due to the small sample size for natural experiments with combinations of drugs, \emph{the RATE estimation process is very noisy} which is reflected in high variability of the measured RATE. Here, the secondary metrics~(Recall and Precision) that are not affected, additionally assert the dominance of \method~over all baselines (Table~\ref{tab:pairs-results}).}

\end{sidewaystable*}

\section{Experimental details}
\label{sec:appendix-experiments}

\subsection{Experimental setup}
\label{sec:exp-setup}
Here, we provide more details about the experimental setup for each investigated setting. This serves to complement the high-level overview given in Table~\ref{tab:datasets}. Experiments were run using Google Cloud Services. Deep learning-based methods (i.e., \method~ and its ablations, S-learner w/ meta-learning, T-learner w/ meta-learning, SIN, GraphITE, FlexTENET, TARNet, and DragonNet) were run on n1-highmem-64 machines with 4x NVIDIA T4 GPU devices. The remaining baselines~(RA-learner, R-learner, X-learner, and T-learner) were run on n1-highmem-64 machines featuring 64 CPUs.

\paragraph{Fair comparison.}

We perform hyper-parameter optimization with random search for all models, with the meta-testing dataset predetermined and held out.  To avoid ``hyperparameter hacking'', hyperparameters ranges are consistent between methods wherever possible, and were chosen using defaults similar to prior work~\cite{kaddour2021causal,harada2021graphite}. Choice of final model hyper-parameters was determined using performance metrics (specific to each dataset) computed on the meta-validation dataset, using the best hyper-parameters over 48 runs (6 servers x 4 NVIDIA T4 GPUs per server x 2 runs per GPU ) (Appendix \ref{sec:hyperparams-grid}).  All table results are computed as the mean across 8 runs of the final model with distinct random seeds.

\subsubsection{Claims dataset} \textbf{Interventions ($W$)}: We consider drug prescriptions consisting of either one drug, or two drugs prescribed in combination. We observed 745 unique single drugs, and 22,883 unique drug pairs, excluding interventions which occurred less than 500 times. Time of intervention corresponds to the \emph{first} day of exposure.  To obtain intervention information, we generated pre-trained drug embeddings from a large-scale biomedical knowledge graph~\cite{chandak2022building} (see Appendix~\ref{sec:intervention-info}). Drugs correspond to nodes in the knowledge graph, which are linked to other nodes (\emph{e.g.} genes, based on the protein target of the drug). Our approach builds on prior work leveraging knowledge graphs for clinical predictions~\cite{zitnik2018modeling, fouladvand2023graph}. Drug combination embeddings are the sum of the embeddings for their constituent drugs. 

\textbf{Control group.} A challenge in such causal analyses of clinical settings is defining a control group. We randomly sample 5\% (1.52M patients) to use as controls, with a 40/20/40 split betweem meta-train/meta-val/meta-test. When sampling a natural experiment for a given intervention, we select all patients from this control group that did not receive such an intervention. An additional challenge is defining time of intervention for the control group. It is not possible to naively sample a random date, because there are large quiet periods in the claims dataset in which no data is logged. We thus sample a date in which the control patient received \emph{a random drug}, and thus our measure of CATE estimates the \emph{increase} in side effect likelihood from the drug(s) $W$, compared to another drug intervention chosen at random.  

\textbf{Outcome ($Y$)}: We focus on the side effect pancytopenia: a deficiency across all three blood cell lines~(red blood cells, white blood cells, and platelets). Pancytopenia is life-threatening, with a 10-20\% mortality rate~\cite{khunger2002pancytopenia,kumar2001pancytopenia}, and is a rare side effect of many common medications~\cite{kuhn2016sider} (\emph{e.g.} arthritis and cancer drugs), which in turn require intensive monitoring of the blood work. Our outcome is defined as the (binary) occurrence of pancytopenia within 90 days of intervention exposure.

\textbf{Features ($X$)}: Following prior work~\cite{guo2022ehr}, patient medical history features were constructed by time-binned counts of each unique medical code (diagnosis, procedure, lab result, drug prescription) before the drug was prescribed. In total, 443,940 features were generated from the following time bins: 0-24 hours, 24-48 hours, 2-7 days, 8-30 days, and 31-90 days, 91-365 days, and 365+ days prior. All individuals in the dataset provided by the insurance company had at least 50 unique days of claims data. 

\textbf{Metrics}: We rely on best practices for evaluating CATE estimators as established established by recent work ~\cite{yadlowsky2021evaluating, chernozhukov2018generic}, which recommend to assess treatment rules by comparing subgroups across different quantiles of estimated CATE. We follow the high vs. others RATE (rank-weighted average treatment effect) approach from Yadlowsky et. al~\cite{yadlowsky2021evaluating}, which computes the difference in average treatment effect (ATE) of the top $u$ percent of individuals (ranked by predicted CATE), versus all individuals:
\begin{align}
    RATE\: @\: u =  \mathbb{E} \Big[ Y(1) - Y(0) \mid F_S(S(X)) \geq 1 - u \Big] - \mathbb{E} \Big[ Y(1) - Y(0)\Big],
\end{align}
where $S(\cdot)$ is a priority score which ranks samples lowest to highest predicted CATE, and $F_S(\cdot)$ is the cumulative distribution function (CDF) of $S(X_i)$. For instance, RATE @ 0.99 would be the difference between the top 1\% of the samples (by estimated CATE) vs. the average treatment effect (ATE) across all samples, which we would expect to be high if the CATE estimator is accurate.
The real-world use case of our model would be preventing drug prescription a small subset of high-risk individuals. Thus, more specifically, for each task $j$, intervention $w_j$ in the meta-dataset, and meta-model  $\Psi_{\theta}$ (our priority score $S(\cdot)$), we compute $RATE\: @\: u$ for each $u$ in $[0.999, 0.998, 0.995, 0.99]$ across individuals who received the intervention. 

We now summarize how to estimate RATE performance metrics for a single intervention (task). As RATE performance is calculated separately per-intervention we are concerned with a single intervention, we use the simplified notation (i.e. $Y_i(1)$ instead of $Y_i(w)$) from Section \ref{Sec:background}. Due to the fundamental problem of causal inference (we can only observe $Y_i(0)$ or $Y_i(1)$ for a given sample), the true RATE, as defined above, cannot be directly observed. 

We follow the method outlined in Section 2.2 and 2.4 of Yadlowsky et. al, ~\cite{yadlowsky2021evaluating} in which we compute \smash{$\hGamma_i$}, a (noisy but unbiased) estimate for CATE which is in turn used to estimate RATE:
\begin{equation}
\label{eq:gamma_approx}
\EE{\hGamma_i \cond X_i} \approx \tau(X_i) = \EE{Y_i(1) - Y_i(0) \cond X_i}.
\end{equation}

Our data is observational, and as such we can estimate  \smash{$\hGamma_i$} using a direct non-parametric estimator~\cite{wager2020stats}:
\begin{align}
    \hGamma_i = W_i(Y_i - \hat{m}(X_i, 0)) + (1-W_i)(\hat{m}(X_i, 1) - Y_i) \label{eqn:dr-score-unconfoundedness}\\
   \hspace{1cm}
    m(x, w) = \EE{Y_i(w) | X_i = x} 
\end{align}
where $m(x, w)$ is a model that predicts the outcome. Here $\hat{m}(x, w)$ represent nonparametric estimates of $m(x, w)$, respectively, which we obtain by fitting a cross-fitting a model to the intervention natural experiment over 5-folds. We use random forest models for $\hat{m}(x, w)$, as they perform well (achieving $\geq 0.90$ ROC AUC across all meta-testing tasks for predicting outcomes) and are robust to choice of hyperparameters.

RATE can then be estimated via sample-averaging estimator. Specifically, we compute the difference between the  average value of \smash{$\hGamma_i$} for those in the top $u$ percent of individuals (based on our meta-model's predictions), compared to the average \smash{$\hGamma_i$} across all individuals. For further discussion on estimating RATE, we refer readers to ~\cite{yadlowsky2021evaluating}. Note that estimates of RATE are \emph{unbounded}: RATE can be less than 0 (due to predictions inversely relating to CATE).

Finally, because our meta-testing dataset consists of individuals treated with drugs \emph{known} in the medical literature to cause pancytopenia (identified by filtering drugs using the side effect database SIDER~\cite{kuhn2016sider}), observational metrics of recall and precision are also a rough \emph{proxy} for successful CATE estimation. Thus, as secondary metrics, we also compute $Recall\: @\: u$ and $Precision\: @\: u$ for the same set of thresholds as RATE, where a positive label is defined as occurrence of pancytopenia after intervention. We find that these metrics are highly correlated to RATE in our performance results. 

\textbf{Training \& Evaluation:}
For each method, we ran a hyperparameter search with $48$ random configurations ($48$ due to running $8$ jobs in parallel on $6$ servers each) that were drawn uniformly from a pre-defined hyperparameter search space~(see Appendix~\ref{sec:hyperparams-grid}). 
Methods that can be trained on multiple tasks to then be applied to tasks unseen during training (i.e., \method~ and its ablations, S-learner w/ meta-learning, T-learner w/ meta-learning, SIN, GraphITE) were trained for $24$ hours (per run) on the meta-training tasks. Model selection was performed on the meta-validation tasks by maximizing the mean RATE@$0.998$ across meta-validation tasks. Then, the best hyperparameter configuration was used to fit $8$ repetition runs across $8$ different random seeds. Each repetition model was then tested on the meta-testing tasks, where for all metrics averages across the testing tasks are reported. To make the setting of multi-task models comparable with single-task models that were trained on meta-testing tasks~(requiring a train and test split of each meta-testing task), the evaluation of all models was computed on the test split of the meta-testing tasks, respectively.
Single-task baselines~(FlexTENET, TARNet, and DragonNet, RA-learner, R-learner, X-learner, and T-learner) were given access to the meta-testing tasks during training. Specifically, model selection was performed on the meta-validation tasks, while the best hyperparameter configuration was used to train $8$ repetition models~(using $8$ random seeds) on the train split of each meta-testing task. For the final evaluation, each single-task model that was fit on meta-testing task $i$ was tested on the test split of the same meta-testing task $i$, and the average metrics were reported across meta-testing tasks.
 
\subsubsection{LINCS} \textbf{Interventions ($W$)}: Interventions in the LINCS dataset consist of a single perturbagen (small molecule). For intervention information, we used the molecular embeddings for each perturbagen using the RDKit featurizer  The same cell line-perturbagen combinations are tested with different perturbagen dosages and times of exposure. \cite{landrum2006rdkit}.
 To maintain consistency in experimental conditions while also ensuring that the dataset is sufficiently large for training a model, we filter for most frequently occurring dosage and time of exposure in the dataset, which are $10 \mu M $ and $24$ hours, respectively.  
We use data from 10,322 different perturbagens. 

\textbf{Control group.} For each perturbagen (at a given timepoint and dose), we use cell lines which did not receive that intervention as the control group.  

\textbf{Outcomes ($Y$)}: We measure gene expression across the top-50 and top-20 landmark differentially expressed genes (DEGs) in the LINCS dataset. Accurately predicting in gene expression in these DEGs is most crucial to the drug discovery process. 

\textbf{Features ($X$)}: We use $19{,}221$ features from the Cancer Cell Line Encyclopedia (CCLE) \cite{Ghandi2019ccle} to describe each cell-line, based on historical gene expression values in a different lab environment. Our dataset consisted of $99$ unique cell lines (after filtering for cell-lines with CCLE features).

\textbf{Metrics}: A key advantage of experiments on cells is that at evaluation time we can observe both $Y(0)$ and $Y(1)$ for the same cell line $X$, through multiple experiments on clones of the same cell-line in controlled lab conditions. In the LINCS dataset, $Y(0)$ is also measured for all cells which received an intervention. Thus, we can directly compute the Precision Estimation of Heterogenous Effects (PEHE) on all treated cells in our meta-testing dataset. PEHE is a standard metric for CATE estimation performance~\cite{hill2011bayesian}, analagous to mean squared error (MSE).
\begin{align}
    PEHE = \frac{1}{N}\sum_{i=1}^N(\tau_i-\hat{\tau_i})^2
\end{align}

\textbf{Training \& Evaluation:}
For each method, we ran a hyperparameter search with $48$ random configurations ($48$ due to running $8$ jobs in parallel on $6$ servers each) that were drawn uniformly from a pre-defined hyperparameter search space~(see Appendix~\ref{sec:hyperparams-grid}). 
Methods that can be trained on multiple tasks to then be applied to tasks unseen during training (i.e., \method~ and its ablations, S-learner w/ meta-learning, T-learner w/ meta-learning, SIN) were trained for $12$ hours (per run) on the meta-training tasks. Model selection was performed on the meta-validation tasks by minimizing the overall PEHE for the Top-20 most differentially expressed genes (DEGs) across meta-validation tasks. Then, the best hyperparameter configuration was used to fit $8$ repetition runs across $8$ different random seeds. Each repetition model was then tested on the meta-testing tasks, where for all metrics averages across the testing tasks are reported. 

\textbf{Data augmentation:} We augment each batch of data during training to also include treated samples that have their pseudo-outcome labels to 0, and their $W$ set to the zero vector.

\subsection{Selecting holdout interventions for meta-validation and meta-testing}
\label{sec:appendix-experiments-holdout}

\subsubsection{Claims.} In the 30.4 million patient insurance claims dataset, each intervention task in meta-train/meta-val/meta-testing corresponds to a natural experiment of multiple patients, with some interventions (\emph{e.g.} commonly prescribed drugs) having millions of associated patients who were prescribed the drug. One challenge is that in this setting, there is overlap in subjects between the natural experiments sampled by \method, which can lead to data leakage between training and testing. For instance, if a patient received Drug 1 (in meta-test) and Drug 2 (meta-train), they would appear in both natural experiments, resulting in data leakage.

We take a conservative approach and  exclude all patients who have ever received a meta-testing drug in their lifespan from the natural experiments for meta-val/meta-train. Similarly, we exclude all patients who received a meta-validation drug from meta-training. 

This approach means we must take great care in selecting meta-testing drugs. Specifically, we must trade off between selecting drugs that are important (covering enough patients) while not diminishing the training dataset size. For instance selecting a commonly prescribed (\emph{e.g.} aspirin) for meta-testing would deplete our meta-training dataset by over 50\% of patients. Thus we only selected meta-test/meta-validation drugs which were prescribed to between 1,000,000 and 100K patients in our dataset, after filtering for only drugs which known to cause Pancytopenia~\cite{kuhn2016sider} (using the SIDER database). From this subset of drugs, we randomly selected 10 meta-testing drugs and 2 meta-validation drugs, resulting in a total meta-testing/meta-validation pool of 4.1 million patients and 685K patients respectively. 

To evaluate on unseen pairs of drugs on the same hold-out test dataset, we additionally created a second pairs testing dataset from the 5 most frequently occurring combinations from the meta-testing dataset. This allowed us to train a single model on the same meta-train split and evaluate on both single drug and drug pair interventions without occurrence of data leakage. Designing a larger evaluation of pairs was not possible because while pairs of drugs are commonly prescribed as intervention, each particular pair of drugs is a rare event, and accurately evaluating CATE estimation performance (for a rare outcome such as Pancytopenia) requires amassing a natural experiment with at least several thousand patients who received the same intervention.

\begin{table}[t]   
\centering
{\renewcommand{\arraystretch}{1.15}
\begin{tabular}{@{}l@{\;}@{\;\;}c@{\;} @{\;\;}c@{\;}}
 & Split  & \# of Patients \\
\hline
Allopurinol & Test  & 815,921   \\
Pregabalin & Test  & 636,995   \\
Mirtazapine & Test  & 623,980   \\
Indomethacin & Test  & 560,380   \\
Colchicine & Test  & 370,397   \\
Hydralazine & Test  & 363,070   \\
Hydroxychloroquine & Test  & 324,750   \\
Methotrexate & Test  & 323,387   \\
Memantine & Test  & 306,832   \\
Fentanyl & Test  & 261,000   \\
Etodolac & Val  & 438,854   \\
Azathioprine & Val  & 100,000   \\
\end{tabular}
}
\vspace{0.05cm}
\caption{Held-out test and validation drugs for our single-drug meta-testing and meta-validation datasets for our Claims evaluation in Table \ref{tab:mainResults}. Drugs are unseen (excluded) during training. All drugs are known to cause pancytopenia~\cite{kuhn2016sider}}
\label{tab:drugs_holdout}
\end{table}

\begin{table}[t]   
\centering
{\renewcommand{\arraystretch}{1.15}
\begin{tabular}{@{}l@{\;}@{\;\;}c@{\;} @{\;\;}c@{\;}}
 & Split  & \# of Patients \\
\hline
Allopurinol + Hydralazine & Test  & 7,859   \\
Methotrexate + Hydroxychloroquine & Test  & 25,716   \\
Pregabalin + Fentanyl & Test  & 5,424   \\
Indomethacin + Colchicine & Test  & 42,846   \\
Mirtazapine + Memantine & Test  & 10,215   \\
\end{tabular}
}
\vspace{0.05cm}
\caption{Held-out test pairs of drugs for our meta-testing and meta-validation datasets in Appendix Table \ref{tab:pairs-results}. Both drugs are unseen (excluded) during training. All drugs are known to cause pancytopenia~\cite{kuhn2016sider}}
\label{tab:pairs_holdout}
\end{table}

\subsubsection{LINCS.}

The goal in selecting holdout interventions for the meta-validation and meta-testing sets was to ensure that they consisted of both cell lines and tasks (small molecules) that had not been seen previously at the time of training (i.e. zero-shot on cell lines and tasks). 

Using a random data splitting approach would result in large portions (up to 50\%) of the data being unused to comply with the zero-shot requirements on cell lines and tasks. One approach to tackle this was to reserve only those tasks in the held-out sets which had been tested on the fewest cell lines. This preserved the maximum amount of data but resulted in an average of just 1 cell line per task in the meta-testing and meta-validation sets, which would not be fair to the non-zero shot baselines.

To address these issues, we designed a new data split procedure that exploits the structure of how tasks and cell lines are paired. To do so, We first clustered tasks by the cell lines they are tested on. We then identified a set of 600 drugs that had all been tested on a shared set of roughly 20 cell lines. We divided the cell lines and tasks within this set into the meta-validation and meta-testing set, while enforcing zero-shot constraints on both. This resulted in roughly 10 cell lines per intervention in both the meta-validation and meta-testing sets, while still maintaining a reasonably large size of 11 distinct cell lines and 300 distinct tasks in both sets. All remaining tasks and cell lines were reserved for the training set. (See Table \ref{tab:lincs_split})

\begin{table}[]
    \centering
    \begin{tabular}{cccc}
         Split & \# Perturbagens & \# Cell-Lines & Mean \#Cell Lines/Task \\
        \hline
         Meta-training & 9717 & 77 & 5.79   \\
         Meta-validation & 304 & 11 & 9.99  \\
         Meta-testing & 301 & 11 & 10.77   \\
    \end{tabular}
    \vspace{0.05cm}
    \caption{Composition of the meta-training, meta-validation and meta-testing sets for the LINCS dataset. No cell lines or drugs (tasks) were shared across any of the splits.}
    \label{tab:lincs_split}
\end{table}

\subsection{Understanding \method's performance}
\label{sec:ablations}

Our comparison to CATE estimators which are restricted to single interventions (Grey, Table \ref{tab:mainResults},\ref{tab:pairs-results}) shows that a key reason for \method's strong performance is the ability to joinly learn across from many intervention datasets, in order to generalize to unseen intervention.

Additionally, in both the Claims and LINCS settings, we conduct two key ablation studies to further understand the underlying reason for \method's strong performance results. 

In our first ablation experiment (w/o meta-learning), we trained the \method~model without employing meta-learning, instead using the standard empirical risk minimization (ERM) technique~\cite{vapnik1991principles}. This can be seen as a specific implementation of the \method~algorithm (refer to Algorithm \ref{alg:cap}) when $k=1$~\cite{nichol2018reptile}. The results of this experiment showed a varying degree of performance deterioration across our primary tests. In the Claims settings, we observed a decrease in the RATE performance metric by 15\%-22\% (refer to Table \ref{tab:mainResults}), while in the LINCS settings, the PEHE performance metric decreased by approximately 0.01 (see Table \ref{tab:mainResults2}). These results indicate that the absence of meta-learning affects the model's performance, although the impact varies depending on the specific setting. An important detail to consider is that the Claims data experiments dealt with substantially larger datasets, each comprising hundreds of thousands of patients per intervention. This extensive scale of data potentially amplifies the benefits of using meta-learning in the \method~model for the Claims dataset. The larger dataset enables the model to adapt to a given task over a larger set of iterations without reusing the same data, thereby enhancing the efficacy of meta-learning.

Our second ablation (w/o RA-learner) assesses the sensitivity of \method's performance to different pseudo-outcome estimation strategies. A key aspect of \method~is \emph{flexibility} in choice of any pseudo-outcome estimator to infer CATE, in contrast to prior work which uses specific CATE estimation strategies~\cite{harada2021graphite, kaddour2021causal}. We find that \method~performance benefits strongly from flexibility of pseudo-outcome estimator choice. We assess this by using an alternative pseudo-outcome estimator. Firstly, we find that this ablation results in much noisier model training. For instance, the standard deviation in RATE across the 8 random seeds increases by 20$\times$ when using the alternative pseudo-outcome estimator in the claims setting. Moreover, the alternative pseudo-outcome estimator typyically worsens performance, decreasing RATE by up to 6\% in the Claims setting , and increasing PEHE by 20\%-21\% in the LINCS setting (Table \ref{tab:mainResults2}). We note that this ablation performs slightly better at the 0.99 threshold, which may be a result of the high variance in this ablation. Specific choice of alternative pseudo-outcome estimator for this ablation varies by setting. We use the R-learner~\cite{nie2021quasi} for Claims as it also achieves strong single task performance (Table \ref{tab:mainResults}, grey) on Claims data. However, R-learner is restricted to single-dimensional outcomes, and thus for LINCS (in which outcomes are 50 and 20 dimensional), we use the PW-learner instead~\cite{curth2021nonparametric}.

\subsection{Hyperparameter space}
\label{sec:hyperparams-grid}

\subsubsection{Claims dataset hyperparameter space}
We list the hyperparameter search spaces for the medical claims dataset in the following tables. Table~\ref{tab:hyperp-caml-claims} represents the search space for \method. The SIN baseline consists of two stages, Stage 1 and Stage 2. For the Stage 1 model, we searched the identical hyperparameter search space as for \method~(Table~\ref{tab:hyperp-caml-claims}). For Stage 2, we used the hyperparameters displayed in Table~\ref{tab:hyperp-sin-claims}. The search space for the GraphITE baseline is displayed in Table~\ref{tab:hyperp-graphite-claims}. For the S-learner and T-learner w/ meta-learning baselines, we use the same hyperparameter space as for \method~(Table~\ref{tab:hyperp-caml-claims}) with the only major difference that the these baselines predicts the outcome $Y$ instead of $\hat{\tau}$.
For all deep learning-based methods, we employed a batch size of $8{,}192$, except for GraphITE, where we were restricted to using a batch size of $512$ due to larger memory requirements. Single-task neural network baselines~(FlexTENet, TARNet, and DragonNet) are shown in Tables~\ref{tab:hyperp-flextenet-claims},\ref{tab:hyperp-tarnet-claims}, and \ref{tab:hyperp-dragonnet-claims}, respectively. For the remaining baselines, i.e., the model-agnostic CATE estimators, the (shared) hyperparameter search space is shown in Table~\ref{tab:hyperp-rf-claims}. Finally, applied L1 regularization to the encoder layer of the customizable neural network models~(that were not reused as external packages), i.e., SIN learner, GraphITE, T-learner w/ meta-learning, and S-learner w/ meta-learning, and \method.

\begin{table}[t]   
\centering
{\renewcommand{\arraystretch}{1.15} 
\begin{tabular}{l c}
\hline
 \bf{Hyperparameter}  & \bf{Search range} \\
\hline
Num. of layers & $\{2, 4, 6\} $\\
Dim. of hidden layers & $\{128, 256\}$ \\
Dropout & $\{0, 0.1\}$  \\
Learning rate & $\{\num{3e-3}, \num{1e-3}, \num{3e-4}, \num{1e-4}\}$ \\
Meta learning rate & $\{1\} $ \\ 
Weight decay & $\{\num{5e-3}\}$ \\
Reptile k & $\{1, 10, 50\}$ \\
L1 regularization coefficient & $\{0, \num{1e-7}, \num{5e-7}\}$ \\
\end{tabular}
\vspace{0.05cm}
\caption{Hyperparameter search space for \textbf{\method}~(our proposed method) on the medical claims dataset.}
\label{tab:hyperp-caml-claims}
}\end{table}

\begin{table}[t]   
    \centering
    {\renewcommand{\arraystretch}{1.15} 
    \begin{tabular}{l c}
\hline
 \bf{Hyperparameter}  & \bf{Search range} \\
\hline
Num. of como layers & $\{2, 4, 6\} $\\
Num. of covariate layers & $\{2, 4, 6\} $\\
Num. of propensity layers & $\{2, 4, 6\} $\\
Num. of treatment layers & $\{2, 4, 6\} $\\
Dim. of hidden como layers & $\{128, 256\}$ \\
Dim. of hidden covariate layers & $\{128, 256\}$ \\
Dim. of hidden treatment layers & $\{128, 256\}$ \\
Dim. of hidden propensity layers & $\{16, 32, 64, 128\}$ \\
Dropout & $\{0, 0.1\}$  \\
Learning rate & $\{\num{3e-3}, \num{1e-3}, \num{3e-4}, \num{1e-4}\}$ \\
Meta learning rate & $\{1\} $ \\ 
Sin Weight decay & $\{0, \num{5e-3}\}$ \\
Pro Weight decay & $\{0, \num{5e-3}\}$ \\
GNN Weight decay & $\{0, \num{5e-3}\}$ \\
Reptile k & $\{1, 10, 50\}$ \\
L1 regularization coefficient & $\{0, \num{1e-7}, \num{5e-7}\}$ \\
\end{tabular}

    \caption{Hyperparameter search space for \textbf{SIN} on the medical claims dataset. The SIN model consists of two stages, Stage 1 and Stage 2. For the Stage 1 model we searched the identical hyperparameter search space as for \method~(Table~\ref{tab:hyperp-caml-claims}). For Stage 2, we used the hyperparameters shown in this table.}
    \label{tab:hyperp-sin-claims}
}\end{table}

\begin{table}[t]   
    \centering
    {\renewcommand{\arraystretch}{1.15} 
    \begin{tabular}{l c}
\hline
 \bf{Hyperparameter}  & \bf{Search range} \\
\hline
Num. of covariate layers & $\{2, 4, 6\} $\\
Num. of treatment layers & $\{2, 4, 6\} $\\
Dim. of hidden treatment layers & $\{128, 256\}$ \\
Dim. of hidden covariate layers & $\{128, 256\}$ \\
Dropout & $\{0, 0.1\}$  \\
Independence regularization coefficient & $\{0, 0.01, 0.1, 1.0\}$ \\
Learning rate & $\{\num{3e-3}, \num{1e-3}, \num{3e-4}, \num{1e-4}\}$ \\
Meta learning rate & $\{1\} $ \\ 
Weight decay & $\{\num{5e-3}\}$ \\
Reptile k & $\{1, 10, 50\}$ \\
L1 regularization coefficient & $\{0, \num{1e-7}, \num{5e-7}\}$ \\
\end{tabular}
    \vspace{0.05cm}
    \caption{Hyperparameter search space for \textbf{GraphITE} on the medical claims dataset.}
    \label{tab:hyperp-graphite-claims}
}\end{table}

\begin{table}[t]   
    \centering
    {\renewcommand{\arraystretch}{1.15} 
    \begin{tabular}{l c}
\hline
 \bf{Hyperparameter}  & \bf{Search range} \\
\hline
Num. of out layers & $\{1, 2, 4\} $\\
Num. of r layers & $\{2, 4, 6\} $\\
Num. units p out & $\{32, 64, 128, 256\}$ \\
Num. units s out & $\{32, 64, 128, 256\}$ \\
Num. units s r & $\{32, 64, 128, 256\}$ \\
Num. units p r & $\{32, 64, 128, 256\}$ \\
Weight decay & $\{\num{5e-3}\}$ \\
Orthogonal penalty & $\{0, \num{1e-5}, \num{1e-3}, 0.1\}$\\
Private out & $\{$True, False $\}$ \\
Learning rate & $\{\num{3e-3}, \num{1e-3}, \num{3e-4}, \num{1e-4}\}$ \\
\end{tabular}

    \vspace{0.05cm}
    \caption{Hyperparameter search space for \textbf{FlexTENet} on the medical claims dataset.}
    \label{tab:hyperp-flextenet-claims}
}\end{table}

\begin{table}[t]   
    \centering
    {\renewcommand{\arraystretch}{1.15} 
    \begin{tabular}{l c}
\hline
 \bf{Hyperparameter}  & \bf{Search range} \\
\hline
Num. of out layers & $\{1, 2, 4\} $\\
Num. of r layers & $\{2, 4, 6\} $\\
Num. units out & $\{128, 256\}$ \\
Weight decay & $\{\num{5e-3}\}$ \\
Penalty disc & $\{0, \num{1e-3}\}$\\
Learning rate & $\{\num{3e-3}, \num{1e-3}, \num{3e-4}, \num{1e-4}\}$ \\
\end{tabular}

    \vspace{0.05cm}
    \caption{Hyperparameter search space for \textbf{TARNet} on the medical claims dataset.}
    \label{tab:hyperp-tarnet-claims}
}\end{table}

\begin{table}[t]   
    \centering
    {\renewcommand{\arraystretch}{1.15} 
    \begin{tabular}{l c}
\hline
 \bf{Hyperparameter}  & \bf{Search range} \\
\hline
Num. of out layers & $\{1, 2, 4\} $\\
Num. of r layers & $\{2, 4, 6\} $\\
Num. units r & $\{128, 256\}$ \\
Num. units out & $\{128, 256\}$ \\
Weight decay & $\{\num{5e-3}\}$ \\
Learning rate & $\{\num{3e-3}, \num{1e-3}, \num{3e-4}, \num{1e-4}\}$ \\
\end{tabular}

    \caption{Hyperparameter search space for \textbf{DragonNet} on the medical claims dataset.}
    \label{tab:hyperp-dragonnet-claims}
}\end{table}

\begin{table}[t]   
    \centering
    {\renewcommand{\arraystretch}{1.15} 
    \begin{tabular}{l c}
\hline
 \bf{Hyperparameter}  & \bf{Search range} \\
\hline
Num. of estimators & $[50,250]$\\
Max depth & $[10,50] $\\
Min sample split & $[2, 8]$ \\
Criterion regress & $\{$squared error, absolute error$\}$ \\
Criterion binary & $\{$gini, entropy$\}$ \\
Max features & $\{$sqrt, log2, auto$\}$ \\
\end{tabular}

    \caption{Hyperparameter search space for model-agnostic CATE estimators, i.e., \textbf{R-learner}, \textbf{X-learner}, \textbf{RA-learner}, and \textbf{T-learner} on the medical claims dataset.}
    \label{tab:hyperp-rf-claims}
}\end{table}

\subsubsection{LINCS hyperparameter space}
We list the hyperparameter search spaces for LINCS in the following tables. \method is shown in Table~\ref{tab:hyperp-caml-lincs}. SIN Stage 1 used the same search space as \method~(Table~\ref{tab:hyperp-caml-lincs}. The search space of SIN Stage 2 is shown in Table~\ref{tab:hyperp-sin-lincs}. S learner and T-learner w/ meta-learning used the same search space as \method. The search space of GraphITE is shown in Table~\ref{tab:hyperp-graphite-lincs}. All methods that were applied to LINCS used a batch size of $20$. 

\begin{table}[t]   
\centering
\begin{tabular}{l c}
\hline
 \bf{Hyperparameter}  & \bf{Search range} \\
\hline
Num. of layers & $\{2, 4, 6\} $\\
Dim. of hidden layers & $\{512, 1024\}$ \\
Dropout & $\{0, 0.1\}$  \\
Learning rate & $\{\num{3e-3}, \num{1e-3}, \num{3e-4}, \num{1e-4}\}$ \\
Meta learning rate & $\{0.1, 0.5, 0.9\} $ \\ 
Weight decay & $\{0.1\}$ \\
Reptile k & $\{1,2,3\}$ \\
L1 regularization coefficient & $\{0, \num{1e-7}, \num{5e-7}\}$ \\
\end{tabular}
\caption{Hyperparameter search space for \textbf{\method}~(our proposed method) on the LINCS dataset.}
\label{tab:hyperp-caml-lincs}
\end{table}

\begin{table}[t]   
\centering
\begin{tabular}{l c}
\hline
\bf{ Hyperparameter}  & \bf{Search range} \\
\hline
Num. of como layers & $\{2, 4, 6\} $\\
Num. of covariates layers & $\{2, 4, 6\} $\\
Num. of propensity layers & $\{2, 4, 6\} $\\
Num. of treatment layers & $\{2, 4, 6\} $\\
Dim. output & $\{128, 256\}$ \\
Dim. of hidden treatment layers & $\{128, 256\}$ \\
Dim. of hidden covariate layers & $\{128, 256\}$ \\
Dim. of hidden como layers & $\{128, 256\}$ \\
Dim. of hidden propensity layers & $\{16,32,64,128\}$ \\
Model dim. & $\{512, 1024\}$ \\
Dropout & $\{0, 0.1\}$  \\
Learning rate & $\{\num{3e-3}, \num{1e-3}, \num{3e-4}, \num{1e-4}\}$ \\
Meta learning rate & $\{0.1, 0.5, 0.9\} $ \\ 
Sin weight decay & $\{0.0, 0.005\}$ \\
Pro weight decay & $\{0.0, 0.005\}$ \\
GNN weight decay & $\{0.0, 0.005\}$ \\
Weight decay & $\{0.1\}$ \\
Reptile k & $\{1,2,3\}$ \\
L1 regularization coefficient & $\{0, \num{1e-7}, \num{5e-7}\}$ \\
\end{tabular}
\caption{Hyperparameter search space for the \textbf{SIN} baseline on the LINCS dataset.}
\label{tab:hyperp-sin-lincs}
\end{table}

\begin{table}[t]   
\centering
\begin{tabular}{l c}
\hline
 \bf{Hyperparameter}  & \bf{Search range} \\
\hline
Num. of covariate layers & $\{2, 4, 6\} $\\
Num. of treatment layers & $\{2, 4, 6\} $\\
Num. of layers & $\{2, 4, 6\} $\\
Dim. of hidden covariate layers & $\{128, 256\}$ \\
Independence regularization coefficient & $\{0, 0.01, 0.1, 1.0\}$ \\
Dropout & $\{0, 0.1\}$  \\
Model dim. & $\{512, 1024\}$  \\
Learning rate & $\{\num{3e-3}, \num{1e-3}, \num{3e-4}, \num{1e-4}\}$ \\
Meta learning rate & $\{0.1, 0.5, 0.9\} $ \\ 
Weight decay & $\{0.1\}$ \\
Reptile k & $\{1,2,3\}$ \\
L1 regularization coefficient & $\{0, \num{1e-7}, \num{5e-7}\}$ \\
\end{tabular}
\caption{Hyperparameter search space for the \textbf{GraphITE} baseline on the LINCS dataset.}
\label{tab:hyperp-graphite-lincs}
\end{table}

\subsection{More details on intervention information} \label{sec:intervention-info}

Here we give more details about the intervention information used for the medical claims dataset. In order to perform zero-shot generalization, we acquired information about a specific intervention through the use of pretrained embeddings. We generated these embeddings on the Precision Medicine Knowledge Graph~\cite{chandak2022building} that contains drug nodes as well as 9 other node types. We extracted embeddings for 7957 drugs from the knowledge graph.

To extract rich neighborhood information from the knowledge graph we used Stargraph~\cite{li2022graph}, which is a coarse-to-fine representation learning algorithm. StarGraph generates a subgraph for each node by sampling from its neighbor nodes (all nodes in the one-hop neighborhood) and anchor nodes (a preselected subset of nodes appearing in the multihop neighborhood). In our case the anchor nodes were the 2\% of graph nodes with the highest degree. For the scoring function we used the augmented version of TripleRE~\cite{yu2022triplere} presented in the StarGraph article~\cite{li2022graph}.

We performed a hyperparameter optimization to compare different models and determine the one we used to calculate our final embeddings~(see Table~\ref{tab:kg_hyperparameters}). The hyperparameter search was random with the objective of minimizing the loss function used in training on held out data. The search range for each of the parameters is displayed in \ref{tab:kg_hyperparameters}. Since certain parameters did not seem to influence the final score as much we decided to use them as constants and focus on optimizing the hyperparameters in the table. Therefore the number of sampled anchors was set to 20 and $u=0.1$ in the augmented TripleRE function, the values matching those seen in Stargraph~\cite{li2022stargraph}.

\begin{table}[t]   
\centering
\label{tab:kg_hyperparameters}
{\renewcommand{\arraystretch}{1.15} 
\begin{tabular}{l c}
\hline
 \bf{Hyperparameter}  & \bf{Search range} \\
\hline
Dropout & [1e-4,1e-1]  \\
Learning rate & [1e-5,1e-3]\\
Weight decay & [1e-5,1e-2]\\
Adversarial temperature & [1,10]\\
Gamma & [0,30]\\
Num. of sampled neighbors & 0-10\\
Dim. of hidden layers & \{ 64, 128, 256, 512\}\\

\end{tabular}
\caption{The hyperparameter optimization search ranges used in the selection of the optimal model for the generation of knowledge graph node embeddings that would serve as intervention information for the medical claims dataset.}
}\end{table}

Our final embeddings were 256-dimensional, the learning rate was 2e-4, the drop-ratio was 5e-3. We used the self-adversarial negative sampling loss with $\gamma=8$ and we sampled 4 neighbor nodes for each subgraph.

To additionally evaluate the quality of the embeddings we assigned classes to drug combinations and then scored them using multiple clustering metrics. We were interested to see if embeddings of drug combinations used for similar purposes would be embedded closer together than other drug combinations. For the class label of single drugs we used the first level of the Anatomical Therapeutic Chemical (ATC) code, which represents one of the 14 anatomical or pharmacological groups. Since certain medications have more than one ATC code, we took the mode of all labels for a specific drug. 
For multiple drugs we combined all distinct first level values and took the mode of them as the label.
We used the Silhouette metric, Calinski Harabasz index and Davies Bouldin index as well as the average classification accuracy over 10 runs of training a random forest classifier on a random sample of 80\% of the dataset and evaluating on the remaining 20\%. Out of all tested embeddings the hyperparameter optimized StarGraph embeddings performed best (exceeding 93\% in the classification accuracy metric).

\subsection{Pseudo-outcome estimation} \label{sec:pseudo-estimation}

In our experiments, we estimate pseudo-outcomes $\Tilde{\tau}$ for a given intervention $w$ using the RA-learner~\cite{curth2021nonparametric}:
\begin{align}
    \Tilde{\tau} = W (Y-\hat{\mu}_0(X)) + (1-W)(\hat{\mu}_1(X)-Y)
\end{align}
where $\hat{\mu}_w$ is an estimate of  $\mu_w(X) = \mathbb{E}_{\mathcal{P}} \Big[ Y \mid X = x, W=w \Big]$. 

Furthermore, in both settings we only estimate CATE for treated individuals. We focus on treated individuals in the Claims setting because we care about the risk of an adverse event for prescribing a sick patients drugs that may cure their sickness, not the adverse event risk of prescribing healthy patients drugs (which is of less clinical interest). In the LINCS setting, we focus on treated cells as for these cell-lines $Y(0)$ is also measured from a cloned cell-line under similar laboratory conditions, which allows us to directly estimate CATE prediction performance using the PEHE metric. As we focus on treated samples, the RA-learner can be simplified to $\Tilde{\tau} = Y-\hat{\mu}_0(X)$. We estimate $\hat{\mu}_0(X)$ using a random forest model in the Claims setting, whereas in the LINCS setting we use the point estimate from the untreated control cell line's gene expression. 

\subsection{Baselines} \label{sec:appendix-baselines}

Here we provide more details on the baselines used in our experiments. 

\emph{Trained on test task:} These baselines leverage CATE estimators which can only be trained on a single task (typically these are the strongest baselines, when there is a large enough dataset for a single task). Thus, we train a single model for each meta-testing task on its train split, and evaluate performance on its test split. We use a number of strong baselines for CATE estimation developed by prior work including both model-agnostic and end-to-end deep learning approaches: T-learner. Specifically, we use the model-agnostic CATE estimators: ~\cite{kunzel2019metalearners}, X-learner~\cite{kunzel2019metalearners}, RA-learner~\cite{curth2021nonparametric}, R-learner~\cite{nie2021quasi}. We additionally use the end-to-end deep learning estimators DragonNet~\cite{shi2019adapting}, TARNet~\cite{shalit2017estimating}, and FlexTENet~\cite{curth2021inductive}, using implementations from ~\cite{curth2021inductive}. For model-agnostic CATE estimators, we use random forest models following prior work~\cite{curth2021really,wager2018estimation}.

\emph{Zero-shot}. These baselines use CATE estimators which incorporate intervention information ($W$) and are capable of multi-task learning. We train these baselines on all meta-training tasks. These baselines have no access to the meta-testing tasks during training. We found in preliminary experiments that in some cases, baseline models trained with vanilla ERM would not even converge. To allow for fair comparison to baselines, we allow for all zero-shot baselines to be trained using Reptile~(by training using the same optimization strategy as Algorithm \ref{alg:cap}, while allowing for training with ERM by including $k=1$ in the hyperparameter search space).

Firstly, we use GraphITE~\cite{harada2021graphite} and Structured Intervention Networks~\cite{kaddour2021causal}. These are, to the best of our knowledge, the only methods from prior work which are (in principle) capable of zero-shot generalization. We use existing implementations provided by the authors~\cite{kaddour2021causal}.

Additionally, we implement two strong baselines which estimate CATE by modeling $Y(w)$ and $Y(0)$, rather than via pseudo-outcomes. These are variants of the S-learner and T-learner~\cite{kunzel2019metalearners} with meta-learning, which use the intervention information as input, rather than one-hot encoded vectors of the different interventions---such that they also have zero-shot capability. Specifically, we train MLPs using the same architecture as \method~to estimate the response function from observed outcomes:
\begin{align}
    \mu(x,w) = \mathbb{E}_{\mathcal{P}} \Big[ Y \mid X = x, W=w \Big]
\end{align}
and estimate CATE by
\begin{align}
    \hat{\tau}_{w}(x) =  \hat{\mu}(x,w) - \hat{\mu}(x,\textbf{0})
\end{align}
Where $w$ denotes the corresponding intervention information $w$ for an intervention, and $\textbf{0}$ denotes a null intervention vector. In the LINCS setting, we represent $\textbf{0}$ as a vector of zeros, whereas in the Claims setting we represent $\textbf{0}$ as the mean embedding of all drugs (as the estimand is the increase in adverse event likelihood compared to a randomly chosen drug). The difference between the T-learner and the S-learner is that the T-learner estimates two models, one for control units and one for treated units. By contrast, the S-learner estimates a shared model across all units.     

\newpage

\section{Additional Experiments}

In general, limited training examples, or biases in the training data, will degrade model performance—and the CaML algorithm is no exception in this regard. For instance, if there are too few examples of prior interventions (e.g., only a handful of training drugs), then zero-shot generalization may become more challenging. Therefore, it is important to study the robustness of CaML’s performance to limitations in the training dataset. To this end, we conduct additional experiments in which we downsample the number of training interventions. 
As expected, we find that: (1) zero-shot capabilities improve as the set of unique training interventions increases in size and (2) we can still achieve strong performance on smaller datasets (e.g., runs with 60\% and 80\%, of the interventions achieve similar performance).

\begin{figure}[htbp]
    \centering
    \begin{subfigure}
        \centering
        \includegraphics[width=0.9\textwidth]{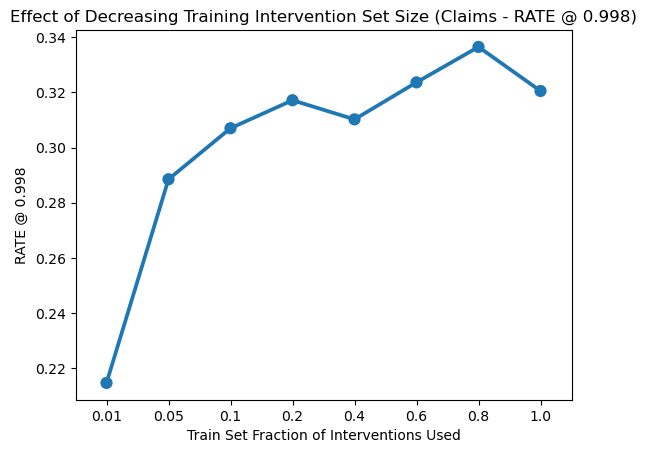} 
        \label{fig:top}
    \end{subfigure}

    \vspace{0.5cm} 

    \begin{subfigure}
        \centering
        \includegraphics[width=0.9\textwidth]{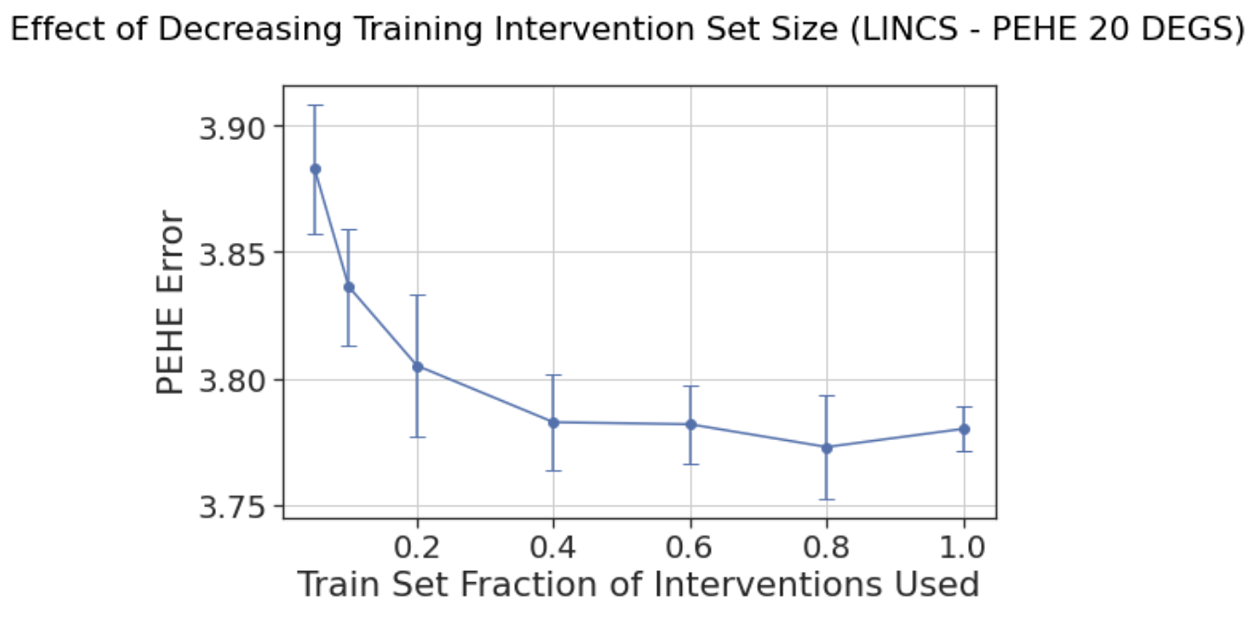} 
        \label{fig:bottom}
    \end{subfigure}

    \caption{Measuring the robustness of CaML to limitations in the training intervention data. We downsample the number of training interventions and measure CaML’s performance. Overall, we find that CaML’s zero-shot capabilities improve as the set of unique training interventions increases in size. Nevertheless, CaML still achieves strong performance on smaller datasets (e.g., runs with 60\% and 80\%, of the interventions achieve similar performance). Results are analogous for other metrics on both datasets. Top: Performance on the Claims dataset at predicting the effect on a novel drug on the likelihood of Pancytopenia onset (RATE @ 0.998). Bottom: Performance on the LINCS dataset at predicting the gene expression of the Top 20 and Top 50 most differentially expressed genes (DEGs).}
    \label{fig:main}
\end{figure}

\newpage

\section{Unbiasedness of CATE estimates}
\label{supp:unbiasedness}

\section*{Unbiasedness of Pseudo-outcome labels}

We show for an example pseudo-outcome label, the RA-learner \cite{curth2021nonparametric}, that the estimated pseudo-outcome labels are unbiased estimates of the true CATE, i.e.:
\begin{align*}
\mathbb{E}\left[\tilde{\tau} | X = x\right] = \tau(x) = \mathbb{E}\left[Y(1) - Y(0)|X=x\right]
\end{align*}

The pseudo-outcome $\tilde{\tau}$ for the RA-learner is defined as $
    \tilde{\tau}_i = Y_i - \hat{\mu}_0(X_i)$
for treated units ($W_i=1$), and 
$
    \tilde{\tau}_i = \hat{\mu}_1(X_i) - Y_i
$
for control units ($W_i=0$).

Here, $\hat{\mu}_0(X), \hat{\mu}_1(X)$ denote unbiased and correctly specified nuisance models for the outcomes $Y(0)$ and $Y(1)$ respectively. In other words, $\mathbb{E}\left[\hat{\mu}_0(x)\right] = \mu_0(x) = \mathbb{E}\left[Y(0) | X = x\right]$ and $\mathbb{E}\left[\hat{\mu}_1(x)\right] = \mu_1(x) = \mathbb{E}\left[Y(1) | X = x\right]$.

We consider the treated and control units separately. For treated units ($W_i=1$), we have: 
\begin{align*}
    \tilde{\tau}_i = Y_i - \hat{\mu}_0(X_i).
\end{align*}
Hence, their expectation, conditioned on covariates $X$ can be written as:
\begin{align*}
    \mathbb{E}\left[\tilde{\tau} | X=x\right] = \mathbb{E}\left[Y - \hat{\mu}_0(X) | X=x \right] = \mathbb{E}\left[Y|X=x\right] - \mathbb{E}\left[\hat{\mu}_0(X) |X=x\right] = \mathbb{E}\left[Y|X=x\right] - \mathbb{E}\left[Y(0)|X=x\right] = \tau(x),
\end{align*}
which by applying the consistency assumption (paper Line 98) for treated units is equivelant to: 
\begin{align*}
     \mathbb{E}\left[Y(1)|X=x\right] - \mathbb{E}\left[Y(0)|X=x\right] = \mathbb{E}\left[Y(1) - Y(0)|X=x\right] = \tau(x).
\end{align*}

Analogously, we can make the same argument for control units ($W=0$). Here, the pseudo-outcome is computed as: 
\begin{align*}
    \tilde{\tau}_i = \hat{\mu}_1(X_i) - Y_i.
\end{align*}
Hence, we have 
\begin{align*} \mathbb{E}\left[\tilde{\tau} | X=x\right] = \mathbb{E}\left[\hat{\mu}_1(X) - Y | X=x \right] = \mathbb{E}\left[\hat{\mu}_1(X)|X=x \right]- \mathbb{E}\left[Y|X=x\right],
\end{align*}
which under consistency (for control units) is equivalent to:
\begin{align*}
     \mathbb{E}\left[Y(1)|X=x\right] - \mathbb{E}\left[Y(0)|X=x\right] = \mathbb{E}\left[Y(1) - Y(0)|X=x\right] = \tau(x).
\end{align*}

\section*{Unbiasedness of Model Loss}

We consider parametrized CATE estimators $\Psi_{\theta} \colon \reals^e \times \reals^d \to \reals$ that take as input intervention information $w \in \reals^e$ (e.g., a drug's attributes) and individual features $x \in \reals^d$ (e.g., patient medical history) to return a scalar for the estimated CATE (e.g., the effect of the drug on patient health). 

We denote the loss function $L$ with regard to a CATE estimator $\Psi$ and a target $y$ as:
\begin{align*}
L(\Psi,y) = \left(\Psi\left(\vw,\vx\right) - y  \right)^2 
\end{align*}
As in Theorem \ref{thm:main_sketch}, we assume pseudo-outcomes targets $\Tilde{\tau}$ during training satisfy $\Tilde{\tau}=\tau+\xi$ where $\tau$ is the true (unobserved) CATE and  $\xi$ is an independent zero-mean noise.
\begin{lemma}
Given two different CATE estimators $\hat{\Psi}_{\theta_1}, \hat{\Psi}_{\theta_2}$, parameterized by $\theta_{1}$ and $\theta_{2}$:
\begin{equation*}
\EE{L(\hat{\Psi}_{\theta_1}, \Tilde{\tau})} \leq \EE{L(\hat{\Psi}_{\theta_2}, \Tilde{\tau})} \implies \
\EE{L(\hat{\Psi}_{\theta_1}, \tau)} \leq \EE{L(\hat{\Psi}_{\theta_2}, \tau)}
\end{equation*}
\end{lemma}
\begin{proof}
We follow a similar argument as Tripuraneni et al. \cite{tripuraneni2021meta}.
\begin{align*}
\EE{L(\hat{\Psi}_{\theta},\Tilde{\tau})} &= \EE{\left(\hat{\Psi}_{\theta}\left(\vw,\vx\right) - \Tilde{\tau} \right)^2} \
= \EE{\left(\hat{\Psi}_{\theta}\left(\vw,\vx\right) - \tau + \tau - \Tilde{\tau} \right)^2} \\
&= \EE{ \left(\hat{\Psi}_{\theta}\left(\vw,\vx\right) - \tau\right)^2 + \left(\tau - \Tilde{\tau}\right)^2 + 2\left(\hat{\Psi}_{\theta}\left(\vw,\vx\right) - \tau\right)\left(\tau - \Tilde{\tau}\right) } \\ 
&= \EE{\left(\hat{\Psi}_{\theta}\left(\vw,\vx\right) - \tau\right)^2} + \EE{\left(\tau - \Tilde{\tau}\right)^2} + 2\EE{\left(\hat{\Psi}_{\theta}\left(\vw,\vx\right) - \tau\right)\left(\tau - \Tilde{\tau}\right)}.
\end{align*}

\setcounter{equation}{0}
Now we subtract the loss for the two models parameterized by by $\theta_{1}$ and $\theta_{2}$:
\begin{align} \label{eq:start}
\EE{L(\hat{\Psi}_{\theta_1},\Tilde{\tau})} - \EE{L(\hat{\Psi}_{\theta_2},\Tilde{\tau})} &=  \\
\EE{\left(\hat{\Psi}_{\theta_1}\left(\vw,\vx\right) - \tau\right)^2} + \cancel{\EE{\left(\tau - \Tilde{\tau}\right)^2}} + 2\EE{\left(\hat{\Psi}_{\theta_1}\left(\vw,\vx\right) - \tau\right)\left(\tau - \Tilde{\tau}\right)} &- \notag \\
\left( \EE{\left(\hat{\Psi}_{\theta_2}\left(\vw,\vx\right) - \tau\right)^2} + \cancel{\EE{\left(\tau - \Tilde{\tau}\right)^2}} + 2\EE{\left(\hat{\Psi}_{\theta_2}\left(\vw,\vx\right) - \tau\right)\left(\tau - \Tilde{\tau}\right)} \right) &= \notag \\
\EE{\left(\hat{\Psi}_{\theta_1}\left(\vw,\vx\right) - \tau\right)^2}  + 2\EE{\left(\hat{\Psi}_{\theta_1}\left(\vw,\vx\right) - \tau\right)\left(\tau - \Tilde{\tau}\right)} &- \notag \\
\EE{\left(\hat{\Psi}_{\theta_2}\left(\vw,\vx\right) - \tau\right)^2} - 2\EE{\left(\hat{\Psi}_{\theta_2}\left(\vw,\vx\right) - \tau\right)\left(\tau - \Tilde{\tau}\right)} &= \notag 
\end{align}

Expanding out the righthand terms give us:

\begin{align*}
\EE{\left(\hat{\Psi}_{\theta_1}\left(\vw,\vx\right) - \tau\right)^2} + 2\EE{ \hat{\Psi}_{\theta_1}\left(\vw,\vx\right)\cdot\tau - \hat{\Psi}_{\theta_1}\left(\vw,\vx\right)\cdot\Tilde{\tau} - \tau^2 + \tau \cdot \Tilde{\tau} } &- \\
\EE{\left(\hat{\Psi}_{\theta_2}\left(\vw,\vx\right) - \tau\right)^2} - 2\EE{ \hat{\Psi}_{\theta_2}\left(\vw,\vx\right)\cdot\tau - \hat{\Psi}_{\theta_2}\left(\vw,\vx\right)\cdot\Tilde{\tau} - \tau^2 + \tau \cdot \Tilde{\tau} } &= \\
\EE{\left(\hat{\Psi}_{\theta_1}\left(\vw,\vx\right) - \tau\right)^2} + 2\EE{ \hat{\Psi}_{\theta_1}\left(\vw,\vx\right)\cdot\tau - \hat{\Psi}_{\theta_1}\left(\vw,\vx\right)\cdot\Tilde{\tau}} - \cancel{\EE{\tau^2} + \EE{\tau \cdot \Tilde{\tau} }} &- \\
\EE{\left(\hat{\Psi}_{\theta_2}\left(\vw,\vx\right) - \tau\right)^2} - 2\EE{ \hat{\Psi}_{\theta_2}\left(\vw,\vx\right)\cdot\tau - \hat{\Psi}_{\theta_2}\left(\vw,\vx\right)\cdot\Tilde{\tau}} + \cancel{\EE{\tau^2} - \EE{\tau \cdot \Tilde{\tau} }} &= \\
\EE{\left(\hat{\Psi}_{\theta_1}\left(\vw,\vx\right) - \tau\right)^2} + 2\EE{ \hat{\Psi}_{\theta_1}\left(\vw,\vx\right)\cdot\tau - \hat{\Psi}_{\theta_1}\left(\vw,\vx\right)\cdot\Tilde{\tau}} &- \\
\EE{\left(\hat{\Psi}_{\theta_2}\left(\vw,\vx\right) - \tau\right)^2} - 2\EE{ \hat{\Psi}_{\theta_2}\left(\vw,\vx\right)\cdot\tau - \hat{\Psi}_{\theta_2}\left(\vw,\vx\right)\cdot\Tilde{\tau} }  &= \\
\EE{\left(\hat{\Psi}_{\theta_1}\left(\vw,\vx\right) - \tau\right)^2} - \EE{\left(\hat{\Psi}_{\theta_2}\left(\vw,\vx\right) - \tau\right)^2} + 2\EE{\left( \hat{\Psi}_{\theta_1}\left(\vw,\vx\right)\cdot(\tau - \Tilde{\tau})\right) - \left( \hat{\Psi}_{\theta_2}\left(\vw,\vx\right)\cdot(\tau - \Tilde{\tau})\right)} &= \\
\EE{\left(\hat{\Psi}_{\theta_1}\left(\vw,\vx\right) - \tau\right)^2} - \EE{\left(\hat{\Psi}_{\theta_2}\left(\vw,\vx\right) - \tau\right)^2} + 
2\EE{\left(\hat{\Psi}_{\theta_1}\left(\vw,\vx\right) - \hat{\Psi}_{\theta_2}\left(\vw,\vx\right) \right)\cdot\underbrace{(\tau - \Tilde{\tau})}_{-\xi}} &= \\
%
\EE{\left(\hat{\Psi}_{\theta_1}\left(\vw,\vx\right) - \tau\right)^2} - \EE{\left(\hat{\Psi}_{\theta_2}\left(\vw,\vx\right) - \tau\right)^2} + 
2\EE{\left(\hat{\Psi}_{\theta_1}\left(\vw,\vx\right) - \hat{\Psi}_{\theta_2}\left(\vw,\vx\right) \right)\cdot-\xi} &= \\
\EE{\left(\hat{\Psi}_{\theta_1}\left(\vw,\vx\right) - \tau\right)^2} - \EE{\left(\hat{\Psi}_{\theta_2}\left(\vw,\vx\right) - \tau\right)^2} + 
2\EE{\hat{\Psi}_{\theta_1}\left(\vw,\vx\right) - \hat{\Psi}_{\theta_2}\left(\vw,\vx\right) }\EE{-\xi} &= \\
\EE{\left(\hat{\Psi}_{\theta_1}\left(\vw,\vx\right) - \tau\right)^2} - \EE{\left(\hat{\Psi}_{\theta_2}\left(\vw,\vx\right) - \tau\right)^2} + 
2\EE{\hat{\Psi}_{\theta_1}\left(\vw,\vx\right) - \hat{\Psi}_{\theta_2}\left(\vw,\vx\right) }\underbrace{\EE{-\xi}}_{0} &= \\
\EE{L(\hat{\Psi}_{\theta_1}, \tau)} - \EE{L(\hat{\Psi}_{\theta_2}, \tau)}, & \\
\end{align*}
from which---due to equality with Equation~\ref{eq:start}---the claim follows.

\end{proof}

\end{document}